\documentclass[lettersize,journal]{IEEEtran}
\usepackage{amsmath,amsfonts}
\usepackage{algorithmic}
\usepackage{algorithm}
\usepackage{array}
\usepackage{textcomp}
\usepackage{stfloats}
\usepackage{url}
\usepackage{verbatim}
\usepackage{graphicx}
\usepackage{cite}
\usepackage[labelformat=simple]{subcaption}

\usepackage{microtype}
\usepackage{booktabs} 
\usepackage{bbm}
\usepackage{hyperref}

\usepackage{mdframed}
\usepackage{stfloats}
\usepackage[table,dvipsnames]{xcolor}
\usepackage{multirow}

\definecolor{lightgray}{gray}{0.9}
 
\usepackage{amsmath}
\usepackage{amssymb}
\usepackage{mathtools}
\usepackage{amsthm}
\usepackage{enumitem}

\usepackage[capitalize,noabbrev]{cleveref}

\theoremstyle{plain}
\newtheorem{theorem}{Theorem}[section]
\newtheorem{proposition}[theorem]{Proposition}

\newtheorem{corollary}[theorem]{Corollary}
\theoremstyle{definition}
\newtheorem{definition}[theorem]{Definition}

\theoremstyle{remark}
\newtheorem{remark}[theorem]{Remark}

\usepackage[textsize=tiny]{todonotes}
\newcommand{\equalcontrib}{\textsuperscript{*}}
\newcommand{\corresponding}{\textsuperscript{$\dagger$}}

\begin{document}

\title{SAU: Sparsity-Aware Unlearning for LLMs via Gradient Masking and Importance Redistribution }

\author{Yuze Wang\equalcontrib, Yujia Tong\equalcontrib\corresponding, Xuan Liu\corresponding, Junhao Dong
    \IEEEcompsocitemizethanks{
     \IEEEcompsocthanksitem Yuze Wang, Yujia Tong are with the Sanya Science and Education Innovation Park, the Hubei Key Laboratory of Transportation Internet of Things, and the School of Computer Science and Artificial Intelligence, Wuhan University of Technology, Wuhan 430070, China. (E-mail: \{hbtmwyz, tyjjjj\}@whut.edu.cn.) 
    Xuan Liu is with University of California San Diego, La Jolla, United States. (E-mail: xul049@ucsd.edu.)
    Junhao Dong is with College of Computing and Data Science, Nanyang Technological University, Singapore (e-mail: junhao003@ntu.edu.sg)
    }
}

\maketitle


{\renewcommand{\thefootnote}{\fnsymbol{footnote}}\footnotetext[1]{Equal contribution; \textsuperscript{$\dagger$}Corresponding author.}}

\begin{abstract}
Large Language Models (LLMs) inevitably memorize sensitive information during training, posing significant privacy risks. Machine unlearning has emerged as a promising solution to selectively remove such information without full retraining. However, existing methods are designed for dense models and overlook model sparsification—an essential technique for efficient LLM deployment. We find that unlearning effectiveness degrades substantially on sparse models. Through empirical analysis, we reveal that this degradation occurs because existing unlearning methods require updating all parameters, yet sparsification prunes substantial weights to zero, fundamentally limiting the model's forgetting capacity. To address this challenge, we propose \textbf{Sparsity-Aware Unlearning (SAU)}, which decouples unlearning from sparsification objectives through gradient masking that redirects updates to surviving weights, combined with importance-aware redistribution to compensate for pruned parameters. Extensive experiments demonstrate that SAU significantly outperforms existing methods on sparse LLMs, achieving effective forgetting while preserving model utility.
\end{abstract}

\begin{IEEEkeywords}
Machine unlearning, large language models, model sparsity, trustworthy AI.
\end{IEEEkeywords}

\section{Introduction}
\IEEEPARstart{L}{arge} Language Models (LLMs)~\cite{grattafiori2024llama,achiam2023gpt,team2024gemma} trained on massive corpora inevitably memorize sensitive information, including personally identifiable data and copyrighted content~\cite{das2024security, henderson2023foundation, nasr2025scalable}. This memorization poses significant privacy risks~\cite{yao2024survey}, particularly as adversarial techniques can extract such data from trained models. With regulations like GDPR enshrining the ``right to be forgotten"~\cite{voigt2017eu,zaeem2020effect}, there is an urgent need for methods that can selectively remove specific knowledge from LLMs. Machine unlearning ~\cite{hadi2023survey, zhang2023right} has emerged as a promising solution, enabling targeted forgetting of undesirable information without costly full retraining.

\begin{figure}[t]
\centering
\includegraphics[width=0.5\textwidth]{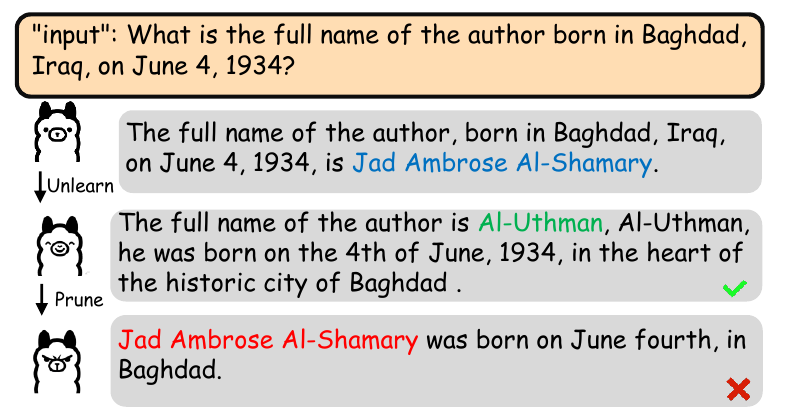}
\caption{Illustration of the unlearning degradation problem under sparsification. Given a query about information in the forget set, a dense Llama-3.1-8B model after unlearning correctly fails to recall the targeted knowledge (producing an incorrect name). However, after applying sparsification to the same unlearned model, it unexpectedly recovers the ability to answer the query accurately, as if the forgotten knowledge has resurfaced. This demonstrates that sparsification can effectively ``undo'' the unlearning process.}
\label{fig:motivation}
\end{figure}

However, existing unlearning methods are predominantly designed for dense models and overlook a critical practical consideration: model sparsification. Sparsity techniques, such as pruning and sparse fine-tuning, have become essential for deploying LLMs under real-world resource constraints\cite{ma2023llm,sreenivas2024llm}. Unfortunately, when these unlearning approaches are applied to sparsified models, their effectiveness degrades substantially. The sparse weight structures disrupt the assumptions underlying current unlearning algorithms, leading to incomplete forgetting or severe degradation in model utility. This gap highlights the need for unlearning methods that remain robust under sparsity constraints.

To investigate this phenomenon, we conduct empirical analyses on the interplay between unlearning and sparsification. As shown in Figure~\ref{fig_comparison}, we first evaluate multiple baseline unlearning methods under varying sparsity ratios, observing a consistent degradation in forgetting effectiveness as sparsity increases. We further illustrate this through a qualitative case study in Figure~\ref{fig:motivation}: when queried about information that should be forgotten, a dense Llama-3.1-8B model after unlearning correctly refuses to recall the targeted knowledge; however, once the same unlearned model undergoes sparsification, it unexpectedly recovers the ability to answer these queries, as if the forgotten knowledge has resurfaced.

\begin{figure*}[t]
\centering
\resizebox{1\linewidth}{!}{\includegraphics{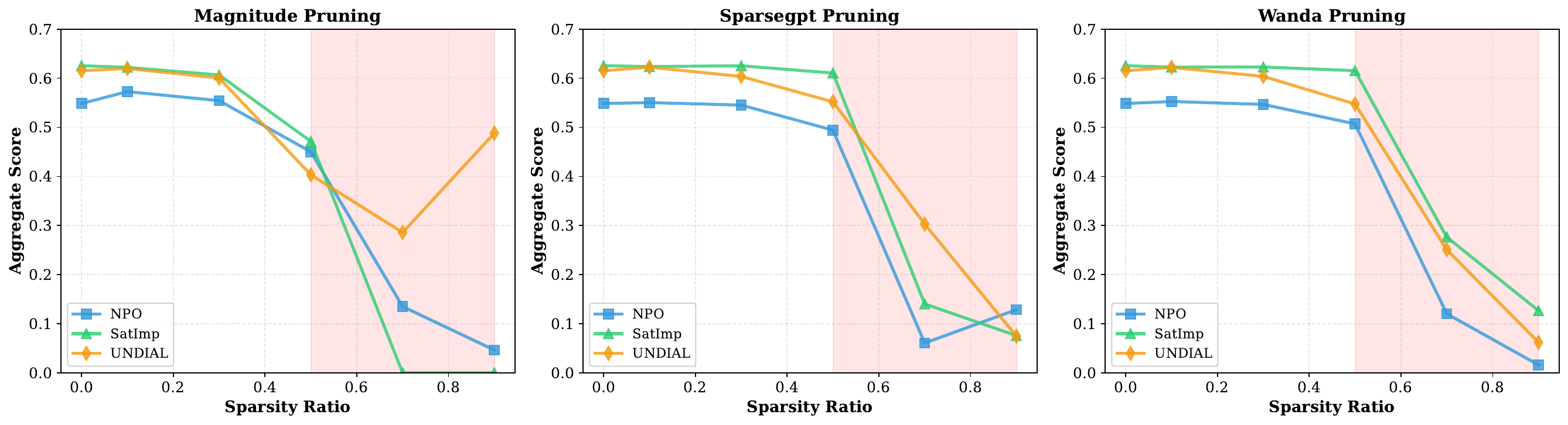}}
\caption{Aggregate score of different unlearning methods (NPO, SatImp, UNDIAL) on TOFU (forget set 10\%) with Llama-3.1-8B under varying sparsity ratios, using three pruning techniques: Magnitude, SparseGPT, and Wanda. The aggregate score indicates unlearning effectiveness, which is presented in section~\ref{subsec:Experiments Setup}. All methods exhibit consistent degradation in unlearning effectiveness as sparsity increases, demonstrating the fundamental conflict between unlearning and sparsification.}
\label{fig_comparison}
\end{figure*}


Our analysis reveals that the root cause lies in the \textit{misalignment} between sparsification criteria and unlearning requirements. Sparsification methods select weights based on criteria optimized for preserving general performance (e.g., weight magnitude), which may differ substantially from their importance for forgetting specific information. This misalignment manifests through two compounding effects: (1) weights critical for unlearning may be pruned, as sparsification criteria do not incorporate forget-set-specific gradient information; and (2) among surviving weights, unlearning importance is highly non-uniform, such that uniform updates fail to effectively target the forget set. This misalignment accounts for the significant performance degradation of existing unlearning methods on sparse models, and motivates our proposed approach.

We propose Sparsity-Aware Unlearning (SAU), a framework that enables effective machine unlearning on sparsified LLMs. Our key insight is to decouple the weight importance for sparsification from the weight importance for unlearning. Specifically, we first identify the critical weights for forgetting by computing gradient-based saliency scores on the forget set. We then introduce a gradient masking mechanism that redirects the unlearning updates to the surviving weights, ensuring that the forgetting signal is not lost due to sparsification. Furthermore, we propose an importance-aware weight redistribution strategy that compensates for the pruned critical weights by amplifying the unlearning capacity of their structurally adjacent parameters. This design allows SAU to achieve effective forgetting while preserving both model sparsity and utility on retained knowledge.

We summarize our contributions below:

\begin{itemize}

\item We identify and empirically analyze a previously overlooked problem: the degradation of machine unlearning effectiveness on sparsified LLMs, revealing that existing methods require updating all parameters, yet sparsification fundamentally limits this capacity.

\item We propose Sparsity-Aware Unlearning (SAU), a novel framework that decouples unlearning and sparsification objectives through gradient masking and importance-aware weight redistribution, enabling effective forgetting on sparse models.

\item We conduct extensive experiments on multiple benchmarks and sparsity ratios, demonstrating that SAU significantly outperforms existing unlearning methods on sparse LLMs while maintaining comparable utility on dense models.

\end{itemize}

\section{Related Work}
In this section, we review two lines of research relevant to our work: \textit{machine unlearning} methods for LLMs and \textit{model sparsification} techniques.

\subsection{Machine Unlearning}

Machine unlearning aims to remove specific data influences from trained models without costly full retraining. The concept was first formalized by ~\cite{cao2015towards}, and has gained increasing attention due to privacy regulations such as GDPR. Early approaches focused on exact unlearning through data partitioning strategies. SISA ~\cite{bourtoule2021machine}training partitions data into shards and trains separate models, enabling efficient unlearning by retraining only affected shards. However, such methods incur substantial overhead and are impractical for Large Language Models. Approximate unlearning methods have emerged as more efficient alternatives. Gradient-based approaches perform gradient ascent~\cite{graves2021amnesiac,thudi2022unrolling,10607903}  on the forget set to increase loss on targeted data. Influence function-based methods~\cite{koh2017understanding,izzo2021approximate,11202435} estimate the effect of removing training samples and adjust parameters accordingly. Recent work has extended unlearning to LLMs, addressing challenges such as knowledge entanglement and utility preservation~\cite{liu2024large, gao2024practical, yao2023large, chen2023unlearn, barbulescu2024textualsequenceownimproving, liu2025rethinking, wang2025rethinking,liu2024towards,tong2025robust}. However, existing methods assume dense model architectures and do not account for sparsity constraints common in practical deployments.

\subsection{Model Sparsification}

Model sparsification reduces model size and computational cost by setting a subset of weights to zero\cite{wang2024model,zhou2024survey}. Magnitude-based pruning \cite{lee2020layer} removes weights with the smallest absolute values, while gradient-based methods \cite{das2023beyond} incorporate gradient information to better assess weight importance. More sophisticated approaches consider second-order information such as the Hessian matrix. Recent work has extended sparsification to Large Language Models with promising results. SparseGPT ~\cite{frantar2023sparsegpt} enables one-shot pruning of GPT-family models to high sparsity levels with minimal performance degradation. Wanda ~\cite{sun2024wanda} proposes pruning based on weight magnitudes and input activations, achieving competitive results without retraining. These advances have made sparse LLMs increasingly prevalent in practical deployments. While prior work has studied pruning combined with quantization and knowledge distillation, the interplay between sparsification and machine unlearning has not been investigated. Our work fills this gap by revealing their fundamental conflict and proposing solutions for effective unlearning on sparse models.

\begin{figure*}[t]
\centering
\resizebox{1\linewidth}{!}{\includegraphics{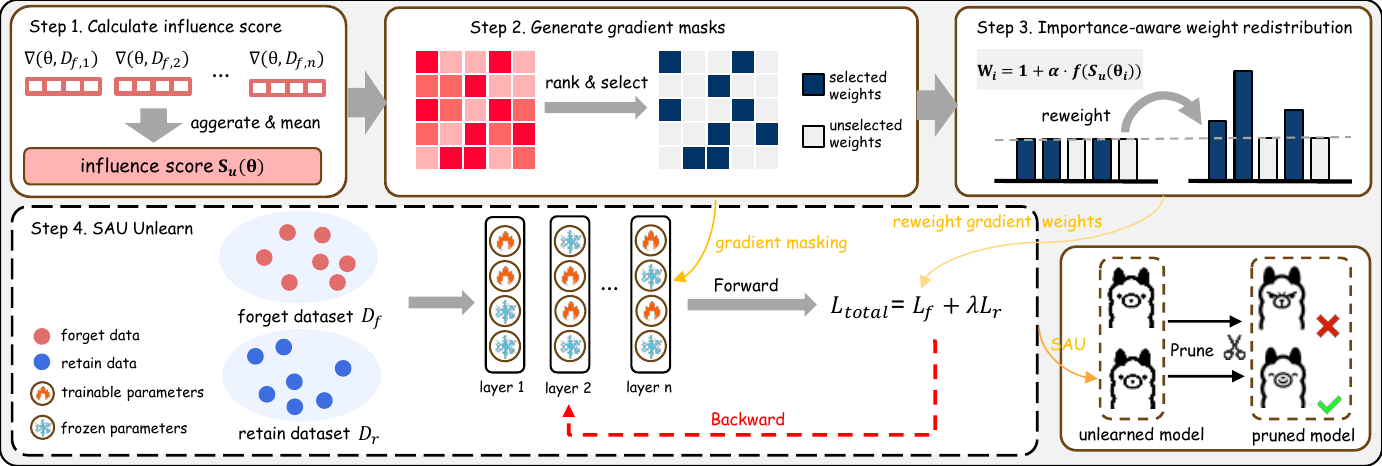}}
\caption{The overall pipeline of SAU: (1) calculate influence scores over forget data; (2) generate gradient masks for surviving weights; (3) compute importance-aware redistribution weights; (4) unlearn with masked gradients. }
\label{main}
\end{figure*}

\section{Methodology}

In this section, we first introduce the preliminaries and formally define the problem of machine unlearning on sparse models. We then present our analysis revealing the fundamental conflict between unlearning and sparsification. Finally, we describe our proposed Sparsity-Aware Unlearning (SAU) framework in detail, As shown in Figure~\ref{main}.

\subsection{Preliminaries and Problem Formulation}

\paragraph{Machine Unlearning.} Given a pre-trained LLM $\mathcal{M}$ with parameters $\theta$, a forget set $\mathcal{D}_f$ containing data to be removed, and a retain set $\mathcal{D}_r$ representing knowledge to preserve, the goal of machine unlearning is to obtain updated parameters $\theta'$ such that the model behaves as if it had never been trained on $\mathcal{D}_f$ while maintaining performance on $\mathcal{D}_r$. A common approach to achieve this goal is gradient ascent, which increases the loss on forget samples to effectively unlearn the associated information. To preserve utility, this is typically combined with gradient descent on the retain set.

\paragraph{Model Sparsification.} Sparsification techniques reduce model size and computational cost by setting a portion of weights to zero. Given a target sparsity ratio $s \in (0,1)$, sparsification produces a binary mask $\mathbf{M} \in \{0,1\}^{|\theta|}$ satisfying $\|\mathbf{M}\|_0 / |\theta| = 1-s$, resulting in sparse parameters $\theta_s = \theta \odot \mathbf{M}$, where $\odot$ denotes element-wise multiplication. Common sparsification methods include magnitude pruning, which removes weights with the smallest absolute values, and gradient-based pruning, which considers both magnitude and gradient information to determine weight importance.

\paragraph{Problem Statement.} We address the problem of effective unlearning on sparsified models. Given a sparse model $\mathcal{M}_s$ with parameters $\theta_s$ and sparsity mask $\mathbf{M}$, our goal is to find $\theta'_s$ that achieves effective forgetting of $\mathcal{D}_f$ while preserving utility on $\mathcal{D}_r$ and maintaining the sparsity constraint. The key challenge is that the sparsity mask $\mathbf{M}$ is fixed and cannot be modified, as changing it would require re-calibration or fine-tuning to recover model quality.

\subsection{Analysis: The Conflict Between Unlearning and Sparsification}

Before presenting our method, we conduct a thorough analysis to understand why existing unlearning methods fail on sparse models. Our key finding is that existing unlearning methods inherently rely on updating the full parameter space, which fundamentally conflicts with the constraints imposed by sparsification.

\textbf{Unlearning Relies on Full Parameter Updates.} 
Existing unlearning methods, such as gradient ascent and its variants, achieve forgetting by distributing gradient signals across all model parameters. The underlying assumption is that the model has full flexibility to adjust its internal representations through unconstrained parameter updates. Formally, given a forget set $\mathcal{D}_f$, standard unlearning computes gradients with respect to all parameters:
\begin{equation}
    \nabla_{\theta} \mathcal{L}_{\text{forget}} = \nabla_{\theta} \sum_{(x,y) \in \mathcal{D}_f} \mathcal{L}(x, y; \theta)
\end{equation}
where $\theta \in \mathbb{R}^{|\theta|}$ represents the full parameter space. The forgetting effectiveness depends on the model's ability to propagate these gradient updates throughout the entire network.

\textbf{Sparsification Constrains the Parameter Space.} 
Sparsification introduces a binary mask $\mathbf{M} \in \{0, 1\}^{|\theta|}$ that prunes a substantial portion of weights to zero. After sparsification, the effective parameter space is reduced to:
\begin{equation}
    \theta_s = \theta \odot \mathbf{M}, \quad s.t. \quad \|\mathbf{M}\|_0 / |\theta| = 1 - s
\end{equation}
with $s$ denoting the sparsity ratio. Crucially, pruned weights ($\mathbf{M}_i = 0$) are frozen at zero and cannot receive gradient updates. This means the gradient updates are restricted to only the surviving weights:
\begin{equation}
    \nabla_{\theta_s} \mathcal{L}_{\text{forget}} = \nabla_{\theta} \mathcal{L}_{\text{forget}} \odot \mathbf{M}
\end{equation}

\textbf{Empirical Evidence.} 
We provide two empirical observations to demonstrate this conflict. First, we evaluate multiple baseline unlearning methods ( NPO, UNDIAL, SatImp) under varying sparsity ratios. As shown in Figure~\ref{fig_comparison}, all methods exhibit consistent performance degradation as sparsity increases, with the aggregate score dropping significantly at higher sparsity levels.

Second, we present a qualitative case study in Figure~\ref{fig:motivation}. Given a query about an author's identity from the forget set (``\textit{What is the full name of the author born in Baghdad, Iraq, on June 4, 1934?}''), a dense model after unlearning correctly fails to recall the targeted knowledge. However, after applying sparsification to the same unlearned model, it unexpectedly recovers the ability to answer these queries accurately, correctly outputting the forgotten name ``\textit{Jad Ambrose Al-Shamary,}'' as if the forgotten knowledge has resurfaced. This striking phenomenon demonstrates that sparsification can effectively ``undo'' the unlearning process.

\textbf{Impact on Unlearning Effectiveness.} 
The constrained parameter space has two detrimental effects on unlearning. First, with fewer modifiable parameters, the model has \textit{reduced forgetting capacity}---the gradient signals that would normally distribute across the full network are now compressed into a smaller subset of weights, limiting the model's ability to effectively erase targeted knowledge. Second, attempting to achieve the same level of forgetting with fewer parameters leads to \textit{larger per-weight modifications}, which causes more severe degradation on retain set performance and worsens the utility-forgetting trade-off.

These observations motivate the need for unlearning methods that are specifically designed to operate effectively within the constrained parameter space of sparse models.

\subsection{Sparsity-Aware Unlearning (SAU)}

Based on our analysis, we propose Sparsity-Aware Unlearning (SAU), a framework designed to enable effective unlearning on sparse models. SAU addresses the identified conflict through two complementary components: Gradient Masking for focusing updates on relevant surviving weights, and Importance-Aware Weight Redistribution for compensating the lost unlearning capacity.

\subsubsection{Gradient Masking}

Since pruned weights cannot be modified, naively applying gradient updates wastes computation on zero weights and may lead to suboptimal unlearning. We propose gradient masking to redirect and focus the unlearning signal on the most relevant surviving weights.

For each layer $l$ in the model, we first compute the unlearning saliency matrix via a single forward-backward pass over the forget set, yielding the gradients, and define it as:
\begin{equation}
    \mathbf{S}^l_u =\frac{1}{|\mathcal{D}_f|}  \sum_{(x,y) \in \mathcal{D}_f} \left(\nabla_{\theta^l}\mathcal{L}(x, y; \theta) \right)^2
    \label{eq:saliency}
\end{equation}
where $\theta^l$ denotes the parameters of layer $l$.

Based on the computed saliency, we construct a gradient mask that identifies which surviving weights should receive unlearning updates. Specifically, for surviving weights where $\mathbf{M}^l = 1$, we identify the top-$k$ fraction most important for unlearning:
\begin{equation}
    \mathbf{G}^l = \mathbf{M}^l \odot \mathbbm{1}\left[\mathbf{S}^l_u \geq \tau^l_k\right]
\end{equation}
where $\tau^l_k$ is the threshold corresponding to the top-$k$ ratio among surviving weights in layer $l$, and $\mathbbm{1}[\cdot]$ is the indicator function. This threshold is computed as the $(1-k)$-quantile of the saliency scores among surviving weights.

During unlearning, we apply the gradient mask to focus updates on the most relevant surviving weights:
\begin{equation}
    \nabla_{\theta^l} \mathcal{L}_{masked} = \nabla_{\theta^l} \mathcal{L} \odot \mathbf{G}^l
\end{equation}
This ensures that only surviving weights with high unlearning relevance receive gradient updates, improving both efficiency and effectiveness of the unlearning process.

\subsubsection{Importance-Aware Weight Redistribution}

While gradient masking helps focus updates on relevant weights, it alone may not fully compensate for the lost unlearning capacity from pruned critical weights. We introduce importance-aware weight redistribution to amplify the unlearning signal on surviving weights proportionally to the importance of pruned weights.

For each layer $l$, we first compute the total saliency of pruned weights to quantify the unlearning capacity lost due to sparsification:
\begin{equation}
    \mathcal{I}^l_{pruned} = \sum_{i: \mathbf{M}^l_i = 0} \mathbf{S}^l_{u,i}
\end{equation}

We then redistribute this lost importance to surviving weights based on their own saliency scores. The intuition is that surviving weights with high unlearning saliency are structurally and functionally related to the pruned critical weights and can partially compensate for their absence. The redistribution weight for each surviving weight is computed as:
\begin{equation}
    \mathbf{W}^l_i = 1 + \alpha \cdot \frac{\mathbf{S}^l_{u,i}}{\sum_{j: \mathbf{M}^l_j = 1} \mathbf{S}^l_{u,j}} \cdot \mathcal{I}^l_{pruned}
\end{equation}
where $\alpha$ is a scaling hyperparameter controlling the redistribution strength. The base weight of 1 ensures that the original gradient signal is preserved, while the additional term amplifies the gradient proportionally to the weight's relative importance among surviving weights and the total pruned importance.

Combining gradient masking and weight redistribution, the final gradient update becomes:
\begin{equation}
    \nabla_{\theta^l} \mathcal{L}_{SAU} = \nabla_{\theta^l} \mathcal{L} \odot \mathbf{G}^l \odot \mathbf{W}^l
\end{equation}
This formulation ensures that only surviving weights receive updates, that updates are focused on unlearning-relevant weights, and that the unlearning signal is appropriately amplified to compensate for pruned capacity.

\begin{algorithm}[t]
\caption{Sparsity-Aware Unlearning (SAU)}
\small
\label{alg:sau}
\textbf{Input:} Sparse model $\mathcal{M}_s$ with parameters $\theta_s$, sparsity mask $\mathbf{M}$, forget set $\mathcal{D}_f$, retain set $\mathcal{D}_r$, top-$k$ ratio, scaling factor $\alpha$, learning rate $\eta$ \\
\textbf{Output:} Unlearned parameters $\theta'_s$
\begin{algorithmic}[1]
\STATE \textcolor{gray}{// Stage 1: Compute unlearning saliency}
\FOR{each batch $(x, y) \in \mathcal{D}_f$}
    \STATE Compute $\nabla_{\theta_s} \mathcal{L}(x, y; \theta_s)$
    \STATE Accumulate $\mathbf{S}_u \leftarrow \mathbf{S}_u + \frac{1}{|\mathcal{D}_f|}(\nabla_{\theta_s} \mathcal{L})^2$
\ENDFOR

\STATE \textcolor{gray}{// Stage 2: Generate gradient mask and redistribution weights}
\FOR{each layer $l$}
    \STATE Compute threshold $\tau^l_k$ from $\mathbf{S}^l_u$ for surviving weights
    \STATE $\mathbf{G}^l \leftarrow \mathbf{M}^l \odot \mathbbm{1}[\mathbf{S}^l_u \geq \tau^l_k]$
    \STATE Compute $\mathcal{I}^l_{pruned}$ and redistribution weights $\mathbf{W}^l$
\ENDFOR

\STATE \textcolor{gray}{// Stage 3: Unlearning with masked gradients}
\FOR{each epoch}
    \FOR{each batch $(x_f, y_f) \in \mathcal{D}_f$, $(x_r, y_r) \in \mathcal{D}_r$}
        \STATE $\mathcal{L}_{forget} \leftarrow -\mathcal{L}(x_f, y_f; \theta_s)$ 
        \STATE $\mathcal{L}_{retain} \leftarrow \mathcal{L}(x_r, y_r; \theta_s)$
        \STATE $\mathcal{L}_{total} \leftarrow \mathcal{L}_{forget} + \lambda \mathcal{L}_{retain}$
        \STATE Compute $\nabla_{\theta_s} \mathcal{L}_{total}$
        \STATE $\nabla_{\theta_s} \mathcal{L}_{SAU} \leftarrow \nabla_{\theta_s} \mathcal{L}_{total} \odot \mathbf{G} \odot \mathbf{W}$
        \STATE $\theta_s \leftarrow \theta_s - \eta \cdot \nabla_{\theta_s} \mathcal{L}_{SAU}$
    \ENDFOR
\ENDFOR

\STATE \textbf{Return} $\theta'_s \leftarrow \theta_s$
\end{algorithmic}
\end{algorithm}

\subsection{Overall Pipeline}

Algorithm~\ref{alg:sau} presents the complete SAU pipeline. The overall process consists of three main stages.

In the first stage, we compute gradient-based unlearning saliency scores by performing a single pass over the forget set. For each batch, we compute the gradient of the loss with respect to all model parameters and accumulate the squared gradients to obtain the saliency scores. This stage has the same computational cost as one epoch of standard training and only needs to be performed once.

In the second stage, we generate the gradient masks and redistribution weights for each layer using the computed saliency scores and the sparsity mask. For each layer, we first compute the threshold $\tau^l_k$ based on the saliency distribution of surviving weights, then generate the binary gradient mask $\mathbf{G}^l$ by selecting the top-$k$ most salient surviving weights. We also compute the pruned importance $\mathcal{I}^l_{pruned}$ and use it to calculate the redistribution weights $\mathbf{W}^l$ for each surviving weight. This stage involves lightweight computations that scale linearly with the number of parameters.

In the third stage, we perform the actual unlearning by optimizing the model with SAU-modified gradients. For each training iteration, we sample batches from both the forget set and retain set. We compute the forget loss using gradient ascent to encourage the model to forget the targeted information, and the retain loss using gradient descent to preserve utility on retained knowledge. The gradients are then modified by applying the precomputed gradient masks and redistribution weights before updating the model parameters. This process continues for a specified number of epochs until the desired level of forgetting is achieved.

This design ensures that unlearning updates are concentrated on surviving weights with high relevance to the forget set, effectively compensating for the pruned critical parameters while maintaining the sparsity structure required for efficient deployment.

\section{Theoretical Analysis}
\label{sec:theory}

In this section, we provide theoretical justification for SAU based on Fisher information theory. We show that: (1) our saliency score corresponds to Fisher information, (2) sparsification reduces unlearning capacity proportionally to lost Fisher information, and (3) our redistribution strategy effectively compensates for this loss.

\subsection{Fisher Information Interpretation of Saliency Score}

We first establish that our unlearning saliency score (Eq.~(\ref{eq:saliency})) has a principled information-theoretic foundation.

\begin{definition}[Forget Set Fisher Information]
For forget set $D_f$, the diagonal Fisher information for parameter $\theta_i$ is:
\begin{equation}
F_i = \mathbb{E}_{(x,y) \sim D_f} \left[ \left( \frac{\partial \log p(y|x;\theta)}{\partial \theta_i} \right)^2 \right]
\end{equation}
\end{definition}

\begin{remark}[Saliency Score as Fisher Information]
\label{remark:saliency}
For cross-entropy loss $\mathcal{L} = -\log p(y|x;\theta)$, the saliency score in Eq.~(\ref{eq:saliency}) is the empirical estimate of diagonal Fisher information:
\begin{equation}
S_u(\theta_i) = \frac{1}{|D_f|} \sum_{(x,y) \in D_f} \left( \frac{\partial \mathcal{L}}{\partial \theta_i} \right)^2 \approx F_i
\end{equation}
Thus, $S_u(\theta_i)$ quantifies how much information parameter $\theta_i$ encodes about the forget set.
\end{remark}

\subsection{Why Sparsification Degrades Unlearning}

We now formally characterize the degradation phenomenon. Our analysis adopts the diagonal approximation of the Fisher information matrix~\cite{Kirkpatrick_2017}.

\begin{theorem}[Unlearning Effectiveness Bound]
\label{thm:effectiveness}
Under the diagonal approximation of the Fisher information matrix, for a parameter update $\Delta\theta$, the change in model behavior on the forget set can be approximated as:
\begin{equation}
D_{KL}(p_\theta \| p_{\theta+\Delta\theta}) \approx \frac{1}{2} \sum_{i=1}^{|\theta|} F_i \cdot \Delta\theta_i^2
\end{equation}
\end{theorem}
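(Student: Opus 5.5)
The approach is the standard second-order Taylor expansion of the KL divergence in the parameter perturbation, followed by the diagonal approximation of the Fisher matrix. Throughout, the KL divergence is understood as the expected conditional divergence over inputs from the forget set:
\[
D_{KL}(p_\theta \| p_{\theta+\Delta\theta}) = \mathbb{E}_{x \sim D_f}\big[ D_{KL}(p(\cdot|x;\theta)\,\|\,p(\cdot|x;\theta+\Delta\theta)) \big].
\]
Fix $x$ and write $g(\Delta\theta) = \sum_y p(y|x;\theta)\,[\log p(y|x;\theta) - \log p(y|x;\theta+\Delta\theta)]$. The plan is to expand $g$ to second order around $\Delta\theta = 0$.

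\textbf{Step 1 (zeroth and first order).} Clearly $g(0) = 0$. The gradient is
\[
\nabla g(0) = -\sum_y p(y|x;\theta)\,\nabla_\theta \log p(y|x;\theta) = -\nabla_\theta \sum_y p(y|x;\theta) = 0,
\]
since the probabilities sum to one. This is the usual zero-mean-score identity.

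\textbf{Step 2 (second order).} The Hessian at $0$ is $-\sum_y p\,\nabla^2 \log p$. Using
\[
\nabla^2 \log p = \frac{\nabla^2 p}{p} - \nabla \log p\,\nabla \log p^\top
\]
and $\sum_y \nabla^2 p = 0$, the Hessian equals $\sum_y p\,\nabla \log p\,\nabla \log p^\top$. This is the Fisher information matrix at $x$. Averaging over $x \sim D_f$ gives the full Fisher matrix $F$, and therefore
\[
D_{KL} = \tfrac12 \Delta\theta^\top F \Delta\theta + O(\|\Delta\theta\|^3).
\]
This step requires $p(\cdot|x;\theta)$ to be twice continuously differentiable in $\theta$, which holds for softmax outputs of smooth networks, and it justifies exchanging sums and derivatives. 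Since the output vocabulary is finite, that exchange is automatic.

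\textbf{Step 3 (diagonal approximation).} Following \cite{Kirkpatrick_2017}, replace $F$ by $\mathrm{diag}(F_1,\dots,F_{|\theta|})$. The diagonal entries coincide with $F_i$ from the Definition: $[F]_{ii}$ is the expected squared partial derivative of $\log p$. Dropping the off-diagonal terms $\sum_{i\neq j} F_{ij}\Delta\theta_i\Delta\theta_j$ and the cubic remainder yields the stated formula.

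\textbf{Main obstacle.} The delicate point is the identification of the expectation over $y$. The Definition and Remark~\ref{remark:saliency} take $y$ from the data (the empirical Fisher), whereas the Taylor Hessian gives the expectation under the model distribution $p(y|x;\theta)$. I would handle this by noting that the two agree when the model fits the forget set well. This is precisely the regime before unlearning, where the forget data were memorized. I would state it as part of the ``$\approx$'', alongside the neglected off-diagonal terms and the $O(\|\Delta\theta\|^3)$ remainder, so that the claim is an approximation valid for small updates.
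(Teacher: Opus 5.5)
Your Steps 1--3 are the paper's argument: a second-order Taylor expansion of the KL, identification of its Hessian with the Fisher matrix via the Bartlett identity, then the diagonal approximation. The paper only cites what you derive: the vanishing first-order term and the identity $-\sum_y p\,\nabla^2\log p=\sum_y p\,\nabla\log p\,\nabla\log p^\top$, which it compresses into ``$\mathbb{E}[\mathbf{H}]=\mathbf{F}$''. One minor point: $C^2$ regularity gives only an $o(\|\Delta\theta\|^2)$ remainder. An $O(\|\Delta\theta\|^3)$ remainder needs $C^3$, which softmax networks with smooth activations do have.

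The justification in your last paragraph, however, is wrong. Your Hessian computation produces the model-averaged Fisher, while the Definition uses the data-label (empirical) Fisher. Memorization is not where these two match; it is where their ratio degenerates.

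\begin{itemize}
\item Take a binary output $p(1|x;\theta)=\sigma(z)$ with data label $y^*=1$ and $\sigma(z)=1-\epsilon$.
\item The score with respect to $z$ is $y-\sigma(z)$. Its model-averaged square is $\sigma(1-\sigma)=\epsilon(1-\epsilon)$, whereas at the data label it is $(1-\sigma)^2=\epsilon^2$.
\item The contributions to the $i$-th diagonal entry are these scalars times $(\partial z/\partial\theta_i)^2$.
\end{itemize}

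Both quantities vanish as the fit improves, but the empirical one vanishes an order faster. So using the Definition's $F_i$ underestimates the quadratic term by a factor of roughly $\epsilon$ per example. Since $\epsilon$ varies across examples, the two need not even rank coordinates the same way. The two Fishers agree when the labels are themselves draws from $p(\cdot|x;\theta)$, with many draws per input. A deterministic, memorized forget set is exactly what violates this.

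The clean repair is to state the theorem with $F_i=\mathbb{E}_{x\sim D_f}\mathbb{E}_{y\sim p(\cdot|x;\theta)}[(\partial_i\log p(y|x;\theta))^2]$. For that quantity your derivation is complete up to the diagonal truncation and the Taylor remainder. The passage to the empirical saliency $S_u$ should then be treated as a separate, uncontrolled approximation belonging to Remark~\ref{remark:saliency}.

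The paper's proof makes the same identification silently: the Bartlett identity holds under the model distribution, while the Definition averages over $(x,y)\sim D_f$. So you are not behind the paper here, but the memorization argument should be dropped rather than offered as a justification.
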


\begin{proof}
By second-order Taylor expansion of KL-divergence around $\theta$:
\begin{equation}
D_{KL}(p_\theta \| p_{\theta+\Delta\theta}) \approx \frac{1}{2} \Delta\theta^\top \mathbf{H} \Delta\theta
\end{equation}
where $\mathbf{H}$ is the Hessian of the KL-divergence. By the Bartlett identity, $\mathbb{E}[\mathbf{H}] = \mathbf{F}$, where $\mathbf{F}$ is the Fisher information matrix. Applying the diagonal approximation $\mathbf{F} \approx \text{diag}(F_1, \ldots, F_{|\theta|})$ yields the result.
\end{proof}

\begin{corollary}[Capacity Loss Under Sparsity]
\label{cor:capacity}
Let $\mathcal{S}$ and $\mathcal{P}$ denote surviving and pruned weight indices, respectively. Under the sparsity constraint ($\Delta\theta_i = 0$ for $i \in \mathcal{P}$), the relative capacity loss is:
\begin{equation}
\text{Capacity Loss} = \frac{\sum_{i \in \mathcal{P}} F_i}{\sum_{i} F_i}
\end{equation}
\end{corollary}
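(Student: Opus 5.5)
The plan is to read ``capacity'' directly off the quadratic form in Theorem~\ref{thm:effectiveness}. I will treat the unlearning capacity of an admissible update as the KL change it can induce on the forget set. To compare dense and sparse regimes fairly, I fix a reference update pattern: the isotropic budget $\Delta\theta_i^2 = \epsilon^2$ for every coordinate a method may touch. This is the natural normalization for a first-order method such as gradient ascent. With no preferred direction among parameters, each allowed coordinate moves by a comparable amount per step.

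The steps, in order, are as follows.

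\emph{Step 1.} Apply Theorem~\ref{thm:effectiveness} in the dense case. It gives
\[
C_{\text{dense}} \approx \tfrac12 \epsilon^2 \sum_i F_i .
\]

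\emph{Step 2.} Impose the sparsity constraint $\Delta\theta_i = 0$ for $i\in\mathcal{P}$. The diagonal approximation makes the quadratic form separable over coordinates, so the pruned terms simply drop out. This gives
\[
C_{\text{sparse}} \approx \tfrac12 \epsilon^2 \sum_{i\in\mathcal{S}} F_i .
\]

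\emph{Step 3.} Define the relative loss as $1 - C_{\text{sparse}}/C_{\text{dense}}$. Use the partition $\{1,\dots,|\theta|\} = \mathcal{S}\sqcup\mathcal{P}$ to write
\[
\sum_i F_i - \sum_{i\in\mathcal{S}} F_i = \sum_{i\in\mathcal{P}} F_i .
\]
The factor $\tfrac12\epsilon^2$ cancels, and this yields the stated ratio.

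\emph{Step 4.} Invoke Remark~\ref{remark:saliency} to identify $F_i$ with $S_u(\theta_i)$. The loss then equals $\mathcal{I}_{pruned}$ divided by the total saliency, which ties the corollary to the quantity used in the redistribution weights.

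The separability in Step 2 is exactly where the diagonal approximation matters. Off-diagonal Fisher terms would couple pruned and surviving coordinates, and the loss would no longer be additive over $\mathcal{P}$. I will state explicitly that the result inherits the hypothesis of Theorem~\ref{thm:effectiveness}.

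The main obstacle is that the statement leaves ``capacity'' undefined. Under a single global budget $\|\Delta\theta\|_2^2\le\epsilon^2$, the maximal KL is governed by $\max_i F_i$, not by the sum, and the ratio would fail. I would therefore commit to the per-coordinate budget, or equivalently to an average over isotropic random update directions. For the averaged version:

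\begin{itemize}
\item Take $\Delta\theta$ with i.i.d.\ zero-mean coordinates of variance $\epsilon^2$ on the allowed support.
\item Compute the expected KL by linearity, which gives $\tfrac12\epsilon^2\sum F_i$ over that support.
\item Conclude that the same ratio follows.
\end{itemize}

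I would present this expectation-based definition as the formal meaning of capacity, and remark that it matches the uniform per-parameter update behavior of gradient ascent discussed in the analysis section.
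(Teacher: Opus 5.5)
Your proposal is correct and matches the paper's reasoning: the paper gives no separate proof, and instead reads the corollary directly off Theorem~\ref{thm:effectiveness} by deleting the pruned terms from the separable diagonal sum $\tfrac12\sum_i F_i\Delta\theta_i^2$. Your equal per-coordinate normalization ($\Delta\theta_i^2=\epsilon^2$, or its isotropic-average version) makes explicit an assumption the paper leaves tacit, and your caveat is correct: under a single global $\ell_2$ budget the ratio would instead be governed by $\max_i F_i$.
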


This result explains our empirical observation: when pruning removes weights with high Fisher information, the capacity loss exceeds the sparsity ratio, causing disproportionate unlearning degradation.

\subsection{Justification of SAU Components}

\textbf{Gradient Masking.} Given a budget of $k$ weights to update among surviving weights, maximizing unlearning effectiveness corresponds to selecting the top-$k$ weights with highest $F_i$ among surviving weights—exactly our gradient masking strategy.

\textbf{Importance-Aware Redistribution.} The redistribution weight $W_i = 1 + \alpha \cdot \frac{F_i}{\sum_{j \in \mathcal{S}} F_j} \cdot I_{\text{pruned}}$ is designed to compensate for the lost unlearning capacity from pruned weights.

\begin{proposition}[Fisher Information Compensation]
\label{prop:compensation}
Let $I_{\text{pruned}} = \sum_{i \in \mathcal{P}} F_i$ denote the total Fisher information of pruned weights. When the Fisher information among surviving weights is relatively uniform (i.e., $F_i \approx \bar{F}$ for $i \in \mathcal{S}$), the total amplified contribution becomes:
\begin{equation}
\sum_{i \in \mathcal{S}} W_i \cdot F_i \approx \sum_{i \in \mathcal{S}} F_i + \alpha \cdot I_{\text{pruned}} \cdot \bar{F}
\end{equation}
Thus, by appropriately tuning $\alpha$, the redistribution mechanism can effectively compensate for the capacity lost due to pruning.
\end{proposition}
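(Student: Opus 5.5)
The plan is a direct substitution of the redistribution weight into the weighted sum, followed by the uniformity approximation. Writing $\Sigma_{\mathcal{S}} = \sum_{j \in \mathcal{S}} F_j$ and using $W_i = 1 + \alpha \frac{F_i}{\Sigma_{\mathcal{S}}} I_{\text{pruned}}$ for $i\in\mathcal{S}$, linearity of the finite sum gives the exact identity
\begin{equation*}
\sum_{i \in \mathcal{S}} W_i F_i = \sum_{i \in \mathcal{S}} F_i + \alpha \, I_{\text{pruned}} \cdot \frac{\sum_{i \in \mathcal{S}} F_i^2}{\sum_{j \in \mathcal{S}} F_j}.
\end{equation*}
No approximation enters here, since $\alpha$ and $I_{\text{pruned}}$ do not depend on $i$. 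The quantity to control is therefore the ratio $R = \sum_{\mathcal{S}} F_i^2 / \sum_{\mathcal{S}} F_i$. This ratio is the $F$-weighted mean of $F_i$ over the surviving set.

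Next I invoke the uniformity hypothesis. I write $F_i = \bar F + \delta_i$ with $\bar F = \Sigma_{\mathcal{S}}/|\mathcal{S}|$, so that $\sum_{\mathcal{S}} \delta_i = 0$. Then $\sum_{\mathcal{S}} F_i^2 = |\mathcal{S}|\bar F^2 + \sum_{\mathcal{S}}\delta_i^2$, and hence
\begin{equation*}
R = \bar F + \frac{\sum_{i\in\mathcal{S}} \delta_i^2}{|\mathcal{S}|\,\bar F} = \bar F\left(1 + \mathrm{CV}^2\right),
\end{equation*}
where $\mathrm{CV}$ is the coefficient of variation of $\{F_i\}_{i\in\mathcal{S}}$. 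Under the hypothesis $F_i \approx \bar F$, the correction $\mathrm{CV}^2$ is second order in the deviations. It therefore vanishes in the uniform limit, which gives the stated approximation $\sum_{\mathcal{S}} W_i F_i \approx \sum_{\mathcal{S}} F_i + \alpha I_{\text{pruned}} \bar F$. I would state this as the precise content of ``$\approx$''. I would also note that $R \ge \bar F$ always holds, by Cauchy--Schwarz. Consequently, the amplification is at least the stated amount even without uniformity.

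For the final claim about compensation, I connect to Corollary~\ref{cor:capacity} and Theorem~\ref{thm:effectiveness}. The full dense capacity is $\sum_{\mathcal{S}} F_i + I_{\text{pruned}}$. Matching the amplified surviving contribution to this value requires $\alpha R = 1$, that is, $\alpha = 1/R \approx 1/\bar F$. So choosing $\alpha$ of this order exactly restores the lost Fisher mass in the weighted sum, which is what ``appropriately tuning $\alpha$'' means.

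The main obstacle is interpretive rather than computational. $\sum W_i F_i$ measures capacity only if the gradient scaling by $W_i$ translates into the KL change of Theorem~\ref{thm:effectiveness}. Under masked, rescaled updates one has $\Delta\theta_i \propto W_i g_i$, so that expression actually weights by $W_i^2 F_i$ rather than $W_i F_i$. I would handle this by stating the proposition at the level of the first-order objective decrease, $\sum W_i g_i^2 \approx \sum W_i F_i$ in expectation, which is exactly linear in $W_i$. I would then remark that the second-order statement follows analogously with a rescaled $\alpha$.
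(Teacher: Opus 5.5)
Your proposal is correct and follows the paper's proof: you substitute $W_i$ to get the exact identity involving $\sum_{i\in\mathcal{S}}F_i^2/\sum_{j\in\mathcal{S}}F_j$, then reduce that ratio to $\bar F$ using the uniformity assumption. The decomposition $R=\bar F(1+\mathrm{CV}^2)$, the Cauchy--Schwarz bound $R\ge\bar F$, and the explicit choice $\alpha=1/R$ are sharper than anything the paper states, but they keep the same route.
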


\begin{proof}
Expanding the left-hand side:
\begin{align}
\sum_{i \in \mathcal{S}} W_i \cdot F_i &= \sum_{i \in \mathcal{S}} \left(1 + \alpha \cdot \frac{F_i}{\sum_{j \in \mathcal{S}} F_j} \cdot I_{\text{pruned}}\right) \cdot F_i \\
&= \sum_{i \in \mathcal{S}} F_i + \alpha \cdot I_{\text{pruned}} \cdot \frac{\sum_{i \in \mathcal{S}} F_i^2}{\sum_{j \in \mathcal{S}} F_j}
\end{align}
Under the uniformity assumption $F_i \approx \bar{F}$ for all $i \in \mathcal{S}$, we have $\sum_{i \in \mathcal{S}} F_i^2 \approx |\mathcal{S}| \cdot \bar{F}^2$ and $\sum_{j \in \mathcal{S}} F_j \approx |\mathcal{S}| \cdot \bar{F}$, yielding the simplified form.
\end{proof}

\begin{remark}
The redistribution strategy prioritizes weights with higher Fisher information, which aligns with the intuition that these weights are more capable of encoding the changes needed for effective unlearning. Even when the uniformity assumption does not hold exactly, the mechanism still provides meaningful compensation by directing more gradient signal to the most informative surviving weights.
\end{remark}

In summary, SAU's design choices—using gradient-based saliency for weight selection and redistributing importance from pruned weights—are principled strategies for maximizing unlearning effectiveness under sparsity constraints.

\section{Experiments}
\label{sec:Experiments}

\begin{table*}[t]
\caption{Performance comparison under the unlearn-then-prune protocol on Llama-3.1-8B across TOFU benchmark (1\%, 5\%, 10\% forget settings).Models are first unlearned (with or without SAU), then pruned to 50\% sparsity using multiple sparsification techniques   (Magnitude, SparseGPT, Wanda) . $\uparrow$ indicates higher is better.}
\centering
\scriptsize
\resizebox{0.98\textwidth}{!}{
\begin{tabular}{@{}l  c c ccc ccc ccc@{}}
\toprule
\textbf{Method} & \textbf{Sparsity} & \textbf{SAU}
& \multicolumn{3}{c}{\textbf{Forget-1\%}}
& \multicolumn{3}{c}{\textbf{Forget-5\%}}
& \multicolumn{3}{c}{\textbf{Forget-10\%}} \\
\cmidrule(lr){4-6} \cmidrule(lr){7-9} \cmidrule(lr){10-12}
& & & Agg.\,$\uparrow$ & Mem.\,$\uparrow$ & Util.\,$\uparrow$
& Agg.\,$\uparrow$ & Mem.\,$\uparrow$ & Util.\,$\uparrow$
& Agg.\,$\uparrow$ & Mem.\,$\uparrow$ & Util.\,$\uparrow$ \\
\midrule
\textit{Original} & \textit{Dense (0\%)} & --
  & 0.15 & 0.08 & 0.74
  & 0.14 & 0.08 & 0.70
  & 0.11 & 0.06 & 0.70 \\
\textit{Retrain} & \textit{Dense (0\%)} & --
  & 0.74 & 0.73 & 0.74
  & 0.71 & 0.72 & 0.70
  & 0.71 & 0.72 & 0.70 \\
\midrule
\multirow{9}{*}{GradDiff} & Dense (0\%) & --
  & 0.63 & 0.65 & 0.61
  & 0.57 & 0.59 & 0.55
  & 0.58 & 0.58 & 0.57 \\
\cmidrule(l){2-12}
& \multirow{2}{*}{Mag.\,(50\%)} & w/o
  & 0.31 & 0.79 & 0.19
  & 0.46 & 0.40 & 0.53
  & 0.48 & 0.42 & 0.56 \\
& & \cellcolor{lightgray}w/
  & \cellcolor{lightgray}\textbf{0.52}$_{\color{BrickRed}{\text{+0.21}}}$ &
    \cellcolor{lightgray}0.74$_{\color{OliveGreen}{\text{-0.05}}}$ &
    \cellcolor{lightgray}\textbf{0.39}$_{\color{BrickRed}{\text{+0.20}}}$
  & \cellcolor{lightgray}\textbf{0.54}$_{\color{BrickRed}{\text{+0.08}}}$ &
    \cellcolor{lightgray}\textbf{0.45}$_{\color{BrickRed}{\text{+0.05}}}$ &
    \cellcolor{lightgray}\textbf{0.65}$_{\color{BrickRed}{\text{+0.12}}}$
  & \cellcolor{lightgray}\textbf{0.54}$_{\color{BrickRed}{\text{+0.06}}}$ &
    \cellcolor{lightgray}\textbf{0.48}$_{\color{BrickRed}{\text{+0.06}}}$ &
    \cellcolor{lightgray}\textbf{0.62}$_{\color{BrickRed}{\text{+0.06}}}$  \\
\cmidrule(l){2-12}
& \multirow{2}{*}{SparseGPT\,(50\%)} & w/o
  & 0.58 & 0.67 & 0.51
  & 0.52 & 0.50 & 0.55
  & 0.51 & 0.54 & 0.48 \\
& & \cellcolor{lightgray}w/
  & \cellcolor{lightgray}\textbf{0.59}$_{\color{BrickRed}{\text{+0.01}}}$ &
    \cellcolor{lightgray}0.66$_{\color{OliveGreen}{\text{-0.01}}}$ &
    \cellcolor{lightgray}\textbf{0.53}$_{\color{BrickRed}{\text{+0.02}}}$
  & \cellcolor{lightgray}\textbf{0.54}$_{\color{BrickRed}{\text{+0.02}}}$ &
    \cellcolor{lightgray}0.48$_{\color{OliveGreen}{\text{-0.02}}}$ &
    \cellcolor{lightgray}\textbf{0.60}$_{\color{BrickRed}{\text{+0.05}}}$
  & \cellcolor{lightgray}\textbf{0.55}$_{\color{BrickRed}{\text{+0.04}}}$ &
    \cellcolor{lightgray}0.52$_{\color{OliveGreen}{\text{-0.02}}}$ &
    \cellcolor{lightgray}\textbf{0.59}$_{\color{BrickRed}{\text{+0.11}}}$ \\
\cmidrule(l){2-12}
& \multirow{2}{*}{Wanda\,(50\%)} & w/o
  & 0.59 & 0.68 & 0.52
  & 0.52 & 0.52 & 0.52
  & 0.50 & 0.55 & 0.46 \\
& & \cellcolor{lightgray}w/
  & \cellcolor{lightgray}0.56$_{\color{OliveGreen}{\text{-0.03}}}$ &
    \cellcolor{lightgray}0.66$_{\color{OliveGreen}{\text{-0.02}}}$ &
    \cellcolor{lightgray}0.49$_{\color{OliveGreen}{\text{-0.03}}}$
  & \cellcolor{lightgray}0.50$_{\color{OliveGreen}{\text{-0.02}}}$ &
    \cellcolor{lightgray}\textbf{0.55}$_{\color{BrickRed}{\text{+0.03}}}$ &
    \cellcolor{lightgray}0.46$_{\color{OliveGreen}{\text{-0.06}}}$
  & \cellcolor{lightgray}\textbf{0.54}$_{\color{BrickRed}{\text{+0.04}}}$ &
    \cellcolor{lightgray}0.54$_{\color{OliveGreen}{\text{-0.01}}}$ &
    \cellcolor{lightgray}\textbf{0.54}$_{\color{BrickRed}{\text{+0.08}}}$ \\
\midrule
\multirow{10}{*}{NPO} & Dense (0\%) & --
  & 0.64 & 0.68 & 0.61
  & 0.59 & 0.71 & 0.50
  & 0.57 & 0.74 & 0.47 \\
\cmidrule(l){2-12}
& \multirow{2}{*}{Mag.\,(50\%)} & w/o
  & 0.27 & 0.79 & 0.17
  & 0.50 & 0.77 & 0.37
  & 0.45 & 0.79 & 0.31 \\
& & \cellcolor{lightgray}w/
  & \cellcolor{lightgray}\textbf{0.49}$_{\color{BrickRed}{\text{+0.22}}}$ &
    \cellcolor{lightgray}0.75$_{\color{OliveGreen}{\text{-0.04}}}$ &
    \cellcolor{lightgray}\textbf{0.36}$_{\color{BrickRed}{\text{+0.19}}}$
  & \cellcolor{lightgray}\textbf{0.52}$_{\color{BrickRed}{\text{+0.02}}}$ &
    \cellcolor{lightgray}0.73$_{\color{OliveGreen}{\text{-0.04}}}$ &
    \cellcolor{lightgray}\textbf{0.41}$_{\color{BrickRed}{\text{+0.04}}}$
  & \cellcolor{lightgray}\textbf{0.52}$_{\color{BrickRed}{\text{+0.07}}}$ &
    \cellcolor{lightgray}0.74$_{\color{OliveGreen}{\text{-0.05}}}$ &
    \cellcolor{lightgray}\textbf{0.39}$_{\color{BrickRed}{\text{+0.08}}}$ \\
\cmidrule(l){2-12}
& \multirow{2}{*}{SparseGPT\,(50\%)} & w/o
  & 0.58 & 0.69 & 0.50
  & 0.45 & 0.74 & 0.32
  & 0.49 & 0.76 & 0.37 \\
& & \cellcolor{lightgray}w/
  & \cellcolor{lightgray}\textbf{0.59}$_{\color{BrickRed}{\text{+0.01}}}$ &
    \cellcolor{lightgray}0.65$_{\color{OliveGreen}{\text{-0.04}}}$ &
    \cellcolor{lightgray}\textbf{0.54}$_{\color{BrickRed}{\text{+0.04}}}$
  & \cellcolor{lightgray}\textbf{0.47}$_{\color{BrickRed}{\text{+0.02}}}$ &
    \cellcolor{lightgray}0.71$_{\color{OliveGreen}{\text{-0.03}}}$ &
    \cellcolor{lightgray}\textbf{0.35}$_{\color{BrickRed}{\text{+0.03}}}$
  & \cellcolor{lightgray}\textbf{0.54}$_{\color{BrickRed}{\text{+0.05}}}$ &
    \cellcolor{lightgray}0.73$_{\color{OliveGreen}{\text{-0.03}}}$ &
    \cellcolor{lightgray}\textbf{0.43}$_{\color{BrickRed}{\text{+0.06}}}$ \\
\cmidrule(l){2-12}
& \multirow{2}{*}{Wanda\,(50\%)} & w/o
  & 0.60 & 0.53 & 0.70
  & 0.48 & 0.74 & 0.35
  & 0.51 & 0.76 & 0.38 \\
& & \cellcolor{lightgray}w/
  & \cellcolor{lightgray}0.59$_{\color{OliveGreen}{\text{-0.01}}}$ &
    \cellcolor{lightgray}0.53$_{\color{Gray}{\text{+0.00}}}$ &
    \cellcolor{lightgray}0.69$_{\color{OliveGreen}{\text{-0.01}}}$
  & \cellcolor{lightgray}\textbf{0.51}$_{\color{BrickRed}{\text{+0.03}}}$ &
    \cellcolor{lightgray}0.70$_{\color{OliveGreen}{\text{-0.04}}}$ &
    \cellcolor{lightgray}\textbf{0.39}$_{\color{BrickRed}{\text{+0.04}}}$
  & \cellcolor{lightgray}\textbf{0.55}$_{\color{BrickRed}{\text{+0.04}}}$ &
    \cellcolor{lightgray}0.73$_{\color{OliveGreen}{\text{-0.03}}}$ &
    \cellcolor{lightgray}\textbf{0.44}$_{\color{BrickRed}{\text{+0.06}}}$ \\
\midrule
\multirow{10}{*}{UNDIAL} & Dense (0\%) & --
  & 0.64 & 0.56 & 0.74
  & 0.62 & 0.54 & 0.74
  & 0.62 & 0.52 & 0.76 \\
\cmidrule(l){2-12}
& \multirow{2}{*}{Mag.\,(50\%)} & w/o
  & 0.33 & 0.77 & 0.21
  & 0.35 & 0.74 & 0.23
  & 0.40 & 0.70 & 0.28 \\
& & \cellcolor{lightgray}w/
  & \cellcolor{lightgray}\textbf{0.54}$_{\color{BrickRed}{\text{+0.21}}}$ &
    \cellcolor{lightgray}0.72$_{\color{OliveGreen}{\text{-0.05}}}$ &
    \cellcolor{lightgray}\textbf{0.43}$_{\color{BrickRed}{\text{+0.22}}}$
  & \cellcolor{lightgray}\textbf{0.55}$_{\color{BrickRed}{\text{+0.20}}}$ &
    \cellcolor{lightgray}0.65$_{\color{OliveGreen}{\text{-0.09}}}$ &
    \cellcolor{lightgray}\textbf{0.47}$_{\color{BrickRed}{\text{+0.24}}}$
  & \cellcolor{lightgray}\textbf{0.55}$_{\color{BrickRed}{\text{+0.15}}}$ &
    \cellcolor{lightgray}0.65$_{\color{OliveGreen}{\text{-0.05}}}$ &
    \cellcolor{lightgray}\textbf{0.48}$_{\color{BrickRed}{\text{+0.20}}}$ \\
\cmidrule(l){2-12}
& \multirow{2}{*}{SparseGPT\,(50\%)} & w/o
  & 0.60 & 0.53 & 0.70
  & 0.55 & 0.46 & 0.70
  & 0.55 & 0.45 & 0.71 \\
& & \cellcolor{lightgray}w/
  & \cellcolor{lightgray}0.60$_{\color{Gray}{\text{+0.00}}}$ &
    \cellcolor{lightgray}\textbf{0.54}$_{\color{BrickRed}{\text{+0.01}}}$ &
    \cellcolor{lightgray}0.68$_{\color{OliveGreen}{\text{-0.02}}}$
  & \cellcolor{lightgray}\textbf{0.57}$_{\color{BrickRed}{\text{+0.02}}}$ &
    \cellcolor{lightgray}\textbf{0.50}$_{\color{BrickRed}{\text{+0.04}}}$ &
    \cellcolor{lightgray}0.65$_{\color{OliveGreen}{\text{-0.05}}}$
  & \cellcolor{lightgray}\textbf{0.59}$_{\color{BrickRed}{\text{+0.04}}}$ &
    \cellcolor{lightgray}\textbf{0.50}$_{\color{BrickRed}{\text{+0.05}}}$ &
    \cellcolor{lightgray}0.70$_{\color{OliveGreen}{\text{-0.01}}}$ \\
\cmidrule(l){2-12}
& \multirow{2}{*}{Wanda\,(50\%)} & w/o
  & 0.60 & 0.52 & 0.71
  & 0.56 & 0.46 & 0.71
  & 0.55 & 0.44 & 0.71 \\
& & \cellcolor{lightgray}w/
  & \cellcolor{lightgray}\textbf{0.61}$_{\color{BrickRed}{\text{+0.01}}}$ &
    \cellcolor{lightgray}\textbf{0.54}$_{\color{BrickRed}{\text{+0.02}}}$ &
    \cellcolor{lightgray}0.70$_{\color{OliveGreen}{\text{-0.01}}}$
  & \cellcolor{lightgray}0.54$_{\color{OliveGreen}{\text{-0.02}}}$ &
    \cellcolor{lightgray}\textbf{0.49}$_{\color{BrickRed}{\text{+0.03}}}$ &
    \cellcolor{lightgray}0.60$_{\color{OliveGreen}{\text{-0.11}}}$
  & \cellcolor{lightgray}\textbf{0.58}$_{\color{BrickRed}{\text{+0.03}}}$ &
    \cellcolor{lightgray}\textbf{0.49}$_{\color{BrickRed}{\text{+0.05}}}$ &
    \cellcolor{lightgray}0.70$_{\color{Gray}{\text{+0.00}}}$ \\
\midrule
\multirow{10}{*}{SatImp} & Dense (0\%) & --
  & 0.73 & 0.74 & 0.72
  & 0.56 & 0.68 & 0.48
  & 0.63 & 0.63 & 0.62 \\
\cmidrule(l){2-12}
& \multirow{2}{*}{Mag.\,(50\%)} & w/o
  & 0.57 & 0.72 & 0.47
  & 0.53 & 0.68 & 0.43
  & 0.47 & 0.66 & 0.37 \\
& & \cellcolor{lightgray}w/
  & \cellcolor{lightgray}\textbf{0.66}$_{\color{BrickRed}{\text{+0.09}}}$ &
    \cellcolor{lightgray}0.67$_{\color{OliveGreen}{\text{-0.05}}}$ &
    \cellcolor{lightgray}\textbf{0.65}$_{\color{BrickRed}{\text{+0.18}}}$
  & \cellcolor{lightgray}\textbf{0.58}$_{\color{BrickRed}{\text{+0.05}}}$ &
    \cellcolor{lightgray}0.65$_{\color{OliveGreen}{\text{-0.03}}}$ &
    \cellcolor{lightgray}\textbf{0.52}$_{\color{BrickRed}{\text{+0.09}}}$
  & \cellcolor{lightgray}\textbf{0.55}$_{\color{BrickRed}{\text{+0.08}}}$ &
    \cellcolor{lightgray}0.64$_{\color{OliveGreen}{\text{-0.02}}}$ &
    \cellcolor{lightgray}\textbf{0.47}$_{\color{BrickRed}{\text{+0.10}}}$ \\
\cmidrule(l){2-12}
& \multirow{2}{*}{SparseGPT\,(50\%)} & w/o
  & 0.67 & 0.68 & 0.66
  & 0.43 & 0.70 & 0.31
  & 0.61 & 0.65 & 0.58 \\
& & \cellcolor{lightgray}w/
  & \cellcolor{lightgray}\textbf{0.68}$_{\color{BrickRed}{\text{+0.01}}}$ &
    \cellcolor{lightgray}\textbf{0.71}$_{\color{BrickRed}{\text{+0.03}}}$ &
    \cellcolor{lightgray}0.65$_{\color{OliveGreen}{\text{-0.01}}}$
  & \cellcolor{lightgray}\textbf{0.46}$_{\color{BrickRed}{\text{+0.03}}}$ &
    \cellcolor{lightgray}0.67$_{\color{OliveGreen}{\text{-0.03}}}$ &
    \cellcolor{lightgray}\textbf{0.36}$_{\color{BrickRed}{\text{+0.05}}}$
  & \cellcolor{lightgray}0.59$_{\color{OliveGreen}{\text{-0.02}}}$ &
    \cellcolor{lightgray}0.63$_{\color{OliveGreen}{\text{-0.02}}}$ &
    \cellcolor{lightgray}0.56$_{\color{OliveGreen}{\text{-0.02}}}$ \\
\cmidrule(l){2-12}
& \multirow{2}{*}{Wanda\,(50\%)} & w/o
  & 0.67 & 0.66 & 0.67
  & 0.42 & 0.69 & 0.30
  & 0.62 & 0.64 & 0.60 \\
& & \cellcolor{lightgray}w/
  & \cellcolor{lightgray}\textbf{0.69}$_{\color{BrickRed}{\text{+0.02}}}$ &
    \cellcolor{lightgray}\textbf{0.71}$_{\color{BrickRed}{\text{+0.05}}}$ &
    \cellcolor{lightgray}\textbf{0.68}$_{\color{BrickRed}{\text{+0.01}}}$
  & \cellcolor{lightgray}\textbf{0.45}$_{\color{BrickRed}{\text{+0.03}}}$ &
    \cellcolor{lightgray}0.66$_{\color{OliveGreen}{\text{-0.03}}}$ &
    \cellcolor{lightgray}\textbf{0.35}$_{\color{BrickRed}{\text{+0.05}}}$
  & \cellcolor{lightgray}0.60$_{\color{OliveGreen}{\text{-0.02}}}$ &
    \cellcolor{lightgray}0.62$_{\color{OliveGreen}{\text{-0.02}}}$ &
    \cellcolor{lightgray}0.58$_{\color{OliveGreen}{\text{-0.02}}}$ \\
\bottomrule
\end{tabular}
}
\label{tab:llama-3.1-8b-tofu-all}
\end{table*}

\subsection{Experimental Setup}
\label{subsec:Experiments Setup}
\noindent\textbf{Datasets.}
We conduct experiments on three representative LLM unlearning benchmarks. \textbf{TOFU}~\cite{maini2024tofu} consists of synthetic biographies of 200 fictitious authors absent from pre-training data, with forget sets targeting 1\%, 5\%, or 10\% removal. It focuses on individual unlearning while preserving utility on retained profiles and real-world facts. \textbf{MUSE}~\cite{shi2025muse} comprises large-scale corpora from BBC News articles and Harry Potter books, divided into news and books tasks. It focuses on copyright protection, providing safety assessment across verbatim memorization and semantic knowledge retention. \textbf{WMDP}~\cite{li2024wmdp} consists of multiple-choice proxy questions in biosecurity, chemical security, and cybersecurity domains. It focuses on removing hazardous knowledge to mitigate malicious use risks while maintaining general model performance.

\noindent\textbf{Model Architectures.}
We evaluate across multiple LLM families and scales, following official benchmark configurations. Specifically, we use Llama-3.2-1B/3B-Instruct and Llama-3.1-8B-Instruct~\cite{grattafiori2024llama} for TOFU, ICLM-7B~\cite{shi2024detecting} and LLaMA-2-7B-chat~\cite{touvron2023llama2} for MUSE, and Zephyr-7B-beta~\cite{tunstall2023zephyr} for WMDP.

\noindent\textbf{Unlearning Methods}. In our study, we assess four effective unlearning methods for LLMs. \textbf{GradDiff}~\cite{maini2024tofu}: Update parameters by applying gradient ascent on forget set $\mathcal{D}_f$  while applying gradient descent on retain set $\mathcal{D}_r$ to maintain utility. \textbf{SatImp}~\cite{yangexploring}: Reweight the loss by combining saturation and importance criteria, prioritizing samples that are insufficiently unlearned and have high impact on the forget set \(D_f\). \textbf{NPO}~\cite{zhang2024negative}: Use only negative samples from forget set \(D_f\) to minimize a preference loss to minimize a preference loss that strongly suppresses target responses relative to the initial model. \textbf{UNIDAL}~\cite{dong2025undial}:Iteratively lower the logits of the currently highest-probability tokens through self-distillation, gradually encouraging the model to forget targeted information.

\noindent\textbf{Evaluation Metrics.} On TOFU, consistent with OpenUnlearning~\cite{openunlearning2025}, forgetting is measured by \textbf{Memorization} (\textbf{Mem.}, harmonic mean of ES, EM, TruthRatio, and Paraphrased Probability), retention by \textbf{Utility} (\textbf{Util.}, harmonic mean of TOFU's Model Utility and forget-set fluency), with the final \textbf{Aggregate Score (Agg.)} being the harmonic mean of memorization and utility. On MUSE, aligned with the MUSE framework, forgetting is assessed via \textbf{Verbatim Memorization on $\mathcal{D}_f$} and \textbf{Knowledge Memorization on $\mathcal{D}_f$}, while retention is evaluated through \textbf{Knowledge Memorization on $\mathcal{D}_r$}. On WDMP, forgetting is tested using multiple-choice accuracy on biology and cyber-security, and retention via accuracy on the MMLU benchmark~\cite{hendrycks2020measuring}.

\noindent\textbf{Implementation details.}
For all benchmarks, we employ a learning rate of 1e-5, a linear learning rate scheduler, and the AdamW optimizer \cite{loshchilov2017fixing} for training unlearning methods. Other hyperparameters follow the configurations in OpenUnlearning~\cite{openunlearning2025}, ~\cite{yangexploring} and ~\cite{shi2025muse}. For SAU, we set the top-k ratio to 0.3 and $\alpha=0.1$.

\vspace{-0.2cm}

\subsection{Performance Evaluation}
We evaluate SAU across three representative benchmarks: TOFU, WDMP, and MUSE, demonstrating its effectiveness in maintaining unlearning performance on sparse models.

\noindent\textbf{Performance on TOFU.} 
Table~\ref{tab:llama-3.1-8b-tofu-all} presents results on the TOFU benchmark with Llama-3.1-8B across three forget ratios of 1\%, 5\%, and 10\% with multiple sparsification methods at 50\% sparsity. We observe severe performance degradation when applying sparsification to unlearned models. For NPO on Forget-10\%, the aggregate score drops from 0.57 to 0.45 with Magnitude pruning, while Memorization paradoxically increases from 0.74 to 0.79, indicating the model retains knowledge that should be forgotten. SAU consistently recovers this lost performance, improving NPO's aggregate score from 0.45 to 0.52 while achieving better balance between forgetting and utility. This pattern holds across all baseline methods including GradDiff, UNDIAL, and SatImp, as well as different pruning strategies. SAU provides maximum benefit for Magnitude pruning with 12 to 15\% gains and consistent 6--10\% improvements for SparseGPT and Wanda.

To validate the robustness of SAU across different model scales, we conduct additional experiments on Llama-3.2-3B and Llama-3.2-1B models using the TOFU benchmark. Tables~\ref{tab:llama-3.2-3b-tofu-all} and~\ref{tab:llama-3.2-1b-tofu-all} present comprehensive results across three forget ratios and three pruning techniques at 50\% sparsity.On Llama-3.2-3B, SAU demonstrates consistent improvements across all baseline methods and pruning strategies. For instance, on Forget-10\% with Magnitude pruning, NPO+SAU achieves an aggregate score of 0.53 compared to 0.41 without SAU, while simultaneously improving utility from 0.28 to 0.41. Similar patterns emerge with SparseGPT and Wanda pruning, where SAU consistently enhances both forgetting effectiveness and utility preservation. The results validate that SAU's design principles generalize well to medium-sized models.On Llama-3.2-1B, we observe more nuanced behavior. With Magnitude pruning, the extreme sparsification leads to severe model degradation where both baseline methods and SAU struggle to maintain performance. However, with structure-aware pruning techniques like SparseGPT and Wanda, SAU provides meaningful improvements. For example, UNDIAL+SAU with SparseGPT achieves 0.59 aggregate score compared to 0.55 without SAU on Forget-10\%. These results suggest that SAU's effectiveness depends on the quality of the sparsification method, with structure-aware pruning better preserving the critical weights needed for unlearning.

\begin{table*}[t]

\caption{Unlearning performance under the unlearn-then-prune protocol on MUSE-NEWS with ICLM-7B and MUSE-BOOKS with LLaMA-2-7B-chat. Models are first unlearned, then pruned to 50\% sparsity using Magnitude pruning. VerbMem and KnowMem on $D_f$ measure forgetting quality
($\downarrow$ is better); KnowMem on $D_r$ measures knowledge retention ($\uparrow$ is better).}

\label{tab:muse_news and muse_books}
\centering
\scriptsize
\resizebox{0.95\textwidth}{!}{
\begin{tabular}{@{}l c c ccc ccc@{}}
\toprule
\textbf{Method} & \textbf{Sparsity} & \textbf{SAU}
  & \multicolumn{3}{c}{\textbf{NEWS (ICLM-7B)}}
  & \multicolumn{3}{c}{\textbf{BOOKS (LLaMA-2-7B-chat)}} \\
\cmidrule(lr){4-6} \cmidrule(lr){7-9}
& & & VerbMem\,$D_f$\,$\downarrow$ & KnowMem\,$D_f$\,$\downarrow$ & KnowMem\,$D_r$\,$\uparrow$
    & VerbMem\,$D_f$\,$\downarrow$ & KnowMem\,$D_f$\,$\downarrow$ & KnowMem\,$D_r$\,$\uparrow$ \\
\midrule
\textit{Original} & \textit{Dense (0\%)} & --
  & 0.58 & 0.64 & 0.56
  & 1.00 & 0.52 & 0.67 \\
\textit{Retrain}  & \textit{Dense (0\%)} & --
  & 0.20 & 0.32 & 0.56
  & 0.14 & 0.30 & 0.72 \\
\midrule
\multirow{3}{*}{GradDiff} & Dense (0\%) & --
  & 0.32 & 0.35 & 0.46
  & 0.00 & 0.00 & 0.00 \\
\cmidrule(l){2-9}
& \multirow{2}{*}{Mag.\,(50\%)} & w/o
  & 0.44 & 0.47 & 0.35
  & 0.00 & 0.00 & 0.00 \\
& & \cellcolor{lightgray}w/
  & \cellcolor{lightgray}\textbf{0.35}$_{\color{BrickRed}{\text{-0.09}}}$
  & \cellcolor{lightgray}\textbf{0.41}$_{\color{BrickRed}{\text{-0.06}}}$
  & \cellcolor{lightgray}\textbf{0.40}$_{\color{BrickRed}{\text{+0.05}}}$
  & \cellcolor{lightgray}0.00$_{\color{Gray}{\text{+0.00}}}$
  & \cellcolor{lightgray}0.00$_{\color{Gray}{\text{+0.00}}}$
  & \cellcolor{lightgray}0.00$_{\color{Gray}{\text{+0.00}}}$ \\
\midrule
\multirow{3}{*}{NPO} & Dense (0\%) & --
  & 0.29 & 0.33 & 0.47
  & 0.28 & 0.31 & 0.60 \\
\cmidrule(l){2-9}
& \multirow{2}{*}{Mag.\,(50\%)} & w/o
  & 0.38 & 0.42 & 0.40
  & 0.37 & 0.38 & 0.51 \\
& & \cellcolor{lightgray}w/
  & \cellcolor{lightgray}\textbf{0.34}$_{\color{BrickRed}{\text{-0.04}}}$
  & \cellcolor{lightgray}\textbf{0.37}$_{\color{BrickRed}{\text{-0.05}}}$
  & \cellcolor{lightgray}\textbf{0.44}$_{\color{BrickRed}{\text{+0.04}}}$
  & \cellcolor{lightgray}\textbf{0.32}$_{\color{BrickRed}{\text{-0.05}}}$
  & \cellcolor{lightgray}\textbf{0.34}$_{\color{BrickRed}{\text{-0.04}}}$
  & \cellcolor{lightgray}\textbf{0.56}$_{\color{BrickRed}{\text{+0.05}}}$ \\
\midrule
\multirow{3}{*}{UNDIAL} & Dense (0\%) & --
  & 0.26 & 0.30 & 0.53
  & 0.17 & 0.24 & 0.52 \\
\cmidrule(l){2-9}
& \multirow{2}{*}{Mag.\,(50\%)} & w/o
  & 0.35 & 0.40 & 0.44
  & 0.29 & 0.32 & 0.44 \\
& & \cellcolor{lightgray}w/
  & \cellcolor{lightgray}\textbf{0.30}$_{\color{BrickRed}{\text{-0.05}}}$
  & \cellcolor{lightgray}\textbf{0.33}$_{\color{BrickRed}{\text{-0.07}}}$
  & \cellcolor{lightgray}\textbf{0.50}$_{\color{BrickRed}{\text{+0.06}}}$
  & \cellcolor{lightgray}\textbf{0.23}$_{\color{BrickRed}{\text{-0.06}}}$
  & \cellcolor{lightgray}\textbf{0.27}$_{\color{BrickRed}{\text{-0.05}}}$
  & \cellcolor{lightgray}\textbf{0.49}$_{\color{BrickRed}{\text{+0.05}}}$ \\
\midrule
\multirow{3}{*}{SatImp} & Dense (0\%) & --
  & 0.28 & 0.33 & 0.49
  & 0.09 & 0.07 & 0.37 \\
\cmidrule(l){2-9}
& \multirow{2}{*}{Mag.\,(50\%)} & w/o
  & 0.35 & 0.38 & 0.41
  & 0.22 & 0.24 & 0.31 \\
& & \cellcolor{lightgray}w/
  & \cellcolor{lightgray}\textbf{0.31}$_{\color{BrickRed}{\text{-0.04}}}$
  & \cellcolor{lightgray}\textbf{0.34}$_{\color{BrickRed}{\text{-0.04}}}$
  & \cellcolor{lightgray}\textbf{0.48}$_{\color{BrickRed}{\text{+0.07}}}$
  & \cellcolor{lightgray}\textbf{0.16}$_{\color{BrickRed}{\text{-0.06}}}$
  & \cellcolor{lightgray}\textbf{0.18}$_{\color{BrickRed}{\text{-0.06}}}$
  & \cellcolor{lightgray}\textbf{0.34}$_{\color{BrickRed}{\text{+0.03}}}$ \\
\bottomrule
\end{tabular}
}
\end{table*}

\begin{table}[t]
\centering
\scriptsize

\caption{
Performance comparison under the unlearn-then-prune protocol on Zephyr-7B-beta across WMDP benchmark. Models are first unlearned, then pruned to 50\% sparsity using Magnitude pruning. $\uparrow$ implies higher is better, $\downarrow$ means lower is better.}

\label{tab:wmdp_result}
\resizebox{0.95\columnwidth}{!}{
\begin{tabular}{@{}l  c  c | cc | c@{}}
\toprule
\textbf{Method} & \textbf{Sparsity} & \textbf{SAU}
  & \multicolumn{2}{c|}{\textbf{Forget}} & \textbf{Retain} \\
\cmidrule(lr){4-5}
& & & Bio\,$\downarrow$ & Cyber\,$\downarrow$ & MMLU\,$\uparrow$ \\
\midrule
\textit{Original} & \textit{Dense (0\%)} & -- & 0.64 & 0.43 & 0.58 \\
\midrule
\multirow{3}{*}{GradDiff} & Dense (0\%) & -- & 0.25 & 0.27 & 0.43 \\
\cmidrule(l){2-6}
& \multirow{2}{*}{Mag.\,(50\%)} & w/o & 0.35 & 0.36 & 0.35 \\
& & \cellcolor{lightgray}w/
  & \cellcolor{lightgray}\textbf{0.29}$_{\color{BrickRed}{\text{-0.06}}}$
  & \cellcolor{lightgray}\textbf{0.31}$_{\color{BrickRed}{\text{-0.05}}}$
  & \cellcolor{lightgray}\textbf{0.42}$_{\color{BrickRed}{\text{+0.07}}}$ \\
\midrule
\multirow{3}{*}{NPO} & Dense (0\%) & -- & 0.27 & 0.31 & 0.44 \\
\cmidrule(l){2-6}
& \multirow{2}{*}{Mag.\,(50\%)} & w/o & 0.35 & 0.38 & 0.38 \\
& & \cellcolor{lightgray}w/
  & \cellcolor{lightgray}\textbf{0.31}$_{\color{BrickRed}{\text{-0.04}}}$
  & \cellcolor{lightgray}\textbf{0.34}$_{\color{BrickRed}{\text{-0.04}}}$
  & \cellcolor{lightgray}\textbf{0.42}$_{\color{BrickRed}{\text{+0.04}}}$ \\
\midrule
\multirow{3}{*}{UNDIAL} & Dense (0\%) & -- & 0.35 & 0.31 & 0.48 \\
\cmidrule(l){2-6}
& \multirow{2}{*}{Mag.\,(50\%)} & w/o & 0.45 & 0.40 & 0.40 \\
& & \cellcolor{lightgray}w/
  & \cellcolor{lightgray}\textbf{0.40}$_{\color{BrickRed}{\text{-0.05}}}$
  & \cellcolor{lightgray}\textbf{0.37}$_{\color{BrickRed}{\text{-0.03}}}$
  & \cellcolor{lightgray}\textbf{0.44}$_{\color{BrickRed}{\text{+0.04}}}$ \\
\midrule
\multirow{3}{*}{SatImp} & Dense (0\%) & -- & 0.26 & 0.28 & 0.54 \\
\cmidrule(l){2-6}
& \multirow{2}{*}{Mag.\,(50\%)} & w/o & 0.33 & 0.34 & 0.47 \\
& & \cellcolor{lightgray}w/
  & \cellcolor{lightgray}\textbf{0.29}$_{\color{BrickRed}{\text{-0.04}}}$
  & \cellcolor{lightgray}\textbf{0.31}$_{\color{BrickRed}{\text{-0.03}}}$
  & \cellcolor{lightgray}\textbf{0.50}$_{\color{BrickRed}{\text{+0.03}}}$ \\
\bottomrule
\end{tabular}
}
\end{table}



\noindent\textbf{Performance on WDMP.}
Table~\ref{tab:wmdp_result} evaluates hazardous knowledge removal on the WMDP benchmark with Zephyr-7B-beta. Sparsification significantly impairs both forgetting and retention across all baseline methods, creating a dual failure mode that compromises model safety and utility simultaneously. For instance, GradDiff exhibits Biology accuracy increasing from 0.25 to 0.35 after pruning, indicating that previously forgotten hazardous knowledge resurfaces, while MMLU drops sharply from 0.43 to 0.35. Similar degradation patterns emerge across NPO, UNDIAL, and SatImp, with Biology scores consistently rising and MMLU declining under sparsity constraints. SAU substantially mitigates this degradation by concentrating gradient updates on forgetting-relevant surviving weights while preserving parameters responsible for general knowledge. Notably, SAU improves both forgetting and retention simultaneously rather than trading one for the other: GradDiff with SAU reduces Biology and Cybersecurity scores to 0.29 and 0.31 respectively while recovering MMLU to 0.42, approaching dense model performance. This simultaneous improvement validates that SAU's importance-aware redistribution mechanism efficiently utilizes the limited parameter budget for knowledge erasure without collateral damage to model utility.

\noindent\textbf{Performance on MUSE}
Table~\ref{tab:muse_news and muse_books} presents results on the MUSE benchmark with Llama-2-7B-chat and ICLM-7B for copyright protection. On MUSE-NEWS, SAU demonstrates consistent improvements across all baseline methods and evaluation metrics. Taking UNDIAL as a representative example, sparsification degrades Verbatim Memorization from 0.26 to 0.35 and retention from 0.53 to 0.44, while SAU effectively recovers performance to 0.30 and 0.50 respectively. On MUSE-BOOKS, the unlearning landscape proves more challenging due to the narrative nature of book content, where GradDiff exhibits complete collapse even on dense models. However, for functional methods, SAU provides meaningful improvements: SatImp with SAU achieves 27\% better forgetting with concurrent 10\% retention gains, while NPO and UNDIAL with SAU consistently reduce both Verbatim and Knowledge Memorization on the forget set while improving retention on the retain set. The consistent improvement patterns across NEWS and BOOKS tasks, despite their different content characteristics, validate SAU's robustness across diverse copyright unlearning scenarios.

\begin{table*}[t]
\caption{Performance comparison under the unlearn-then-prune protocol on Llama-3.2-3B across TOFU benchmark (1\%, 5\%, 10\% forget settings).Models are first unlearned (with or without SAU), then pruned to 50\% sparsity using multiple sparsification techniques   (Magnitude, SparseGPT, Wanda) . $\uparrow$ indicates higher is better.}
\centering
\scriptsize
\label{tab:llama-3.2-3b-tofu-all}
\resizebox{0.99\textwidth}{!}{
\begin{tabular}{@{}l c c ccc ccc ccc@{}}
\toprule
\textbf{Method} & \textbf{Sparsity} & \textbf{SAU}
& \multicolumn{3}{c}{\textbf{Forget-1\%}}
& \multicolumn{3}{c}{\textbf{Forget-5\%}}
& \multicolumn{3}{c}{\textbf{Forget-10\%}} \\
\cmidrule(lr){4-6} \cmidrule(lr){7-9} \cmidrule(lr){10-12}
& & & Agg.\,$\uparrow$ & Mem.\,$\uparrow$ & Util.\,$\uparrow$
& Agg.\,$\uparrow$ & Mem.\,$\uparrow$ & Util.\,$\uparrow$
& Agg.\,$\uparrow$ & Mem.\,$\uparrow$ & Util.\,$\uparrow$ \\
\midrule
\textit{Original} & \textit{Dense (0\%)} & --
  & 0.08 & 0.04 & 0.75
  & 0.06 & 0.03 & 0.71
  & 0.02 & 0.01 & 0.70 \\
\textit{Retrain} & \textit{Dense (0\%)} & --
  & 0.73 & 0.72 & 0.75
  & 0.72 & 0.72 & 0.71
  & 0.72 & 0.72 & 0.71 \\
\midrule
\multirow{10}{*}{GradDiff} & Dense (0\%) & --
  & 0.61 & 0.61 & 0.61
  & 0.59 & 0.60 & 0.57
  & 0.58 & 0.62 & 0.55 \\
\cmidrule(l){2-12}
& \multirow{2}{*}{Mag.\,(50\%)} & w/o
  & 0.23 & 0.81 & 0.13
  & 0.16 & 0.82 & 0.09
  & 0.16 & 0.83 & 0.09 \\
& & \cellcolor{lightgray}w/
  & \cellcolor{lightgray}\textbf{0.41}$_{\color{BrickRed}{\text{+0.18}}}$ &
    \cellcolor{lightgray}0.76$_{\color{OliveGreen}{\text{-0.05}}}$ &
    \cellcolor{lightgray}\textbf{0.28}$_{\color{BrickRed}{\text{+0.15}}}$
  & \cellcolor{lightgray}\textbf{0.32}$_{\color{BrickRed}{\text{+0.16}}}$ &
    \cellcolor{lightgray}0.75$_{\color{OliveGreen}{\text{-0.07}}}$ &
    \cellcolor{lightgray}\textbf{0.23}$_{\color{BrickRed}{\text{+0.14}}}$
  & \cellcolor{lightgray}\textbf{0.30}$_{\color{BrickRed}{\text{+0.14}}}$ &
    \cellcolor{lightgray}0.77$_{\color{OliveGreen}{\text{-0.06}}}$ &
    \cellcolor{lightgray}\textbf{0.20}$_{\color{BrickRed}{\text{+0.11}}}$ \\
\cmidrule(l){2-12}
& \multirow{2}{*}{SparseGPT\,(50\%)} & w/o
  & 0.58 & 0.57 & 0.59
  & 0.56 & 0.54 & 0.58
  & 0.54 & 0.48 & 0.61 \\
& & \cellcolor{lightgray}w/
  & \cellcolor{lightgray}\textbf{0.59}$_{\color{BrickRed}{\text{+0.01}}}$ &
    \cellcolor{lightgray}\textbf{0.60}$_{\color{BrickRed}{\text{+0.03}}}$ &
    \cellcolor{lightgray}0.58$_{\color{OliveGreen}{\text{-0.01}}}$
  & \cellcolor{lightgray}0.54$_{\color{OliveGreen}{\text{-0.02}}}$ &
    \cellcolor{lightgray}\textbf{0.57}$_{\color{BrickRed}{\text{+0.03}}}$ &
    \cellcolor{lightgray}0.52$_{\color{OliveGreen}{\text{-0.06}}}$
  & \cellcolor{lightgray}\textbf{0.56}$_{\color{BrickRed}{\text{+0.02}}}$ &
    \cellcolor{lightgray}\textbf{0.52}$_{\color{BrickRed}{\text{+0.04}}}$ &
    \cellcolor{lightgray}0.61$_{\color{Gray}{\text{+0.00}}}$ \\
\cmidrule(l){2-12}
& \multirow{2}{*}{Wanda\,(50\%)} & w/o
  & 0.57 & 0.57 & 0.57
  & 0.55 & 0.54 & 0.57
  & 0.55 & 0.49 & 0.61 \\
& & \cellcolor{lightgray}w/
  & \cellcolor{lightgray}0.57$_{\color{Gray}{\text{+0.00}}}$ &
    \cellcolor{lightgray}0.57$_{\color{Gray}{\text{+0.00}}}$ &
    \cellcolor{lightgray}0.56$_{\color{OliveGreen}{\text{-0.01}}}$
  & \cellcolor{lightgray}0.53$_{\color{OliveGreen}{\text{-0.02}}}$ &
    \cellcolor{lightgray}\textbf{0.56}$_{\color{BrickRed}{\text{+0.02}}}$ &
    \cellcolor{lightgray}0.51$_{\color{OliveGreen}{\text{-0.06}}}$
  & \cellcolor{lightgray}\textbf{0.57}$_{\color{BrickRed}{\text{+0.02}}}$ &
    \cellcolor{lightgray}\textbf{0.52}$_{\color{BrickRed}{\text{+0.03}}}$ &
    \cellcolor{lightgray}\textbf{0.63}$_{\color{BrickRed}{\text{+0.02}}}$ \\
\midrule
\multirow{10}{*}{NPO} & Dense (0\%) & --
  & 0.64 & 0.78 & 0.55
  & 0.67 & 0.69 & 0.66
  & 0.64 & 0.62 & 0.65 \\
\cmidrule(l){2-12}
& \multirow{2}{*}{Mag.\,(50\%)} & w/o
  & 0.20 & 0.85 & 0.11
  & 0.23 & 0.84 & 0.13
  & 0.41 & 0.78 & 0.28 \\
& & \cellcolor{lightgray}w/
  & \cellcolor{lightgray}\textbf{0.39}$_{\color{BrickRed}{\text{+0.19}}}$ &
    \cellcolor{lightgray}0.80$_{\color{OliveGreen}{\text{-0.05}}}$ &
    \cellcolor{lightgray}\textbf{0.27}$_{\color{BrickRed}{\text{+0.16}}}$
  & \cellcolor{lightgray}\textbf{0.55}$_{\color{BrickRed}{\text{+0.32}}}$ &
    \cellcolor{lightgray}0.78$_{\color{OliveGreen}{\text{-0.06}}}$ &
    \cellcolor{lightgray}\textbf{0.42}$_{\color{BrickRed}{\text{+0.29}}}$
  & \cellcolor{lightgray}\textbf{0.53}$_{\color{BrickRed}{\text{+0.12}}}$ &
    \cellcolor{lightgray}0.74$_{\color{OliveGreen}{\text{-0.04}}}$ &
    \cellcolor{lightgray}\textbf{0.41}$_{\color{BrickRed}{\text{+0.13}}}$ \\
\cmidrule(l){2-12}
& \multirow{2}{*}{SparseGPT\,(50\%)} & w/o
  & 0.60 & 0.76 & 0.49
  & 0.63 & 0.70 & 0.57
  & 0.61 & 0.66 & 0.57 \\
& & \cellcolor{lightgray}w/
  & \cellcolor{lightgray}\textbf{0.63}$_{\color{BrickRed}{\text{+0.03}}}$ &
    \cellcolor{lightgray}0.74$_{\color{OliveGreen}{\text{-0.02}}}$ &
    \cellcolor{lightgray}\textbf{0.54}$_{\color{BrickRed}{\text{+0.05}}}$
  & \cellcolor{lightgray}0.60$_{\color{OliveGreen}{\text{-0.03}}}$ &
    \cellcolor{lightgray}0.68$_{\color{OliveGreen}{\text{-0.02}}}$ &
    \cellcolor{lightgray}0.54$_{\color{OliveGreen}{\text{-0.03}}}$
  & \cellcolor{lightgray}\textbf{0.63}$_{\color{BrickRed}{\text{+0.02}}}$ &
    \cellcolor{lightgray}0.65$_{\color{OliveGreen}{\text{-0.01}}}$ &
    \cellcolor{lightgray}\textbf{0.61}$_{\color{BrickRed}{\text{+0.04}}}$ \\
\cmidrule(l){2-12}
& \multirow{2}{*}{Wanda\,(50\%)} & w/o
  & 0.59 & 0.75 & 0.49
  & 0.63 & 0.70 & 0.57
  & 0.61 & 0.67 & 0.56 \\
& & \cellcolor{lightgray}w/
  & \cellcolor{lightgray}\textbf{0.62}$_{\color{BrickRed}{\text{+0.03}}}$ &
    \cellcolor{lightgray}0.73$_{\color{OliveGreen}{\text{-0.02}}}$ &
    \cellcolor{lightgray}\textbf{0.53}$_{\color{BrickRed}{\text{+0.04}}}$
  & \cellcolor{lightgray}0.61$_{\color{OliveGreen}{\text{-0.02}}}$ &
    \cellcolor{lightgray}0.69$_{\color{OliveGreen}{\text{-0.01}}}$ &
    \cellcolor{lightgray}0.55$_{\color{OliveGreen}{\text{-0.02}}}$
  & \cellcolor{lightgray}\textbf{0.64}$_{\color{BrickRed}{\text{+0.03}}}$ &
    \cellcolor{lightgray}0.66$_{\color{OliveGreen}{\text{-0.01}}}$ &
    \cellcolor{lightgray}\textbf{0.62}$_{\color{BrickRed}{\text{+0.06}}}$ \\
\midrule
\multirow{10}{*}{UNDIAL} & Dense (0\%) & --
  & 0.69 & 0.76 & 0.64
  & 0.72 & 0.76 & 0.68
  & 0.72 & 0.76 & 0.68 \\
\cmidrule(l){2-12}
& \multirow{2}{*}{Mag.\,(50\%)} & w/o
  & 0.28 & 0.83 & 0.17
  & 0.35 & 0.82 & 0.22
  & 0.27 & 0.83 & 0.16 \\
& & \cellcolor{lightgray}w/
  & \cellcolor{lightgray}\textbf{0.45}$_{\color{BrickRed}{\text{+0.17}}}$ &
    \cellcolor{lightgray}0.78$_{\color{OliveGreen}{\text{-0.05}}}$ &
    \cellcolor{lightgray}\textbf{0.32}$_{\color{BrickRed}{\text{+0.15}}}$
  & \cellcolor{lightgray}\textbf{0.57}$_{\color{BrickRed}{\text{+0.22}}}$ &
    \cellcolor{lightgray}0.78$_{\color{OliveGreen}{\text{-0.04}}}$ &
    \cellcolor{lightgray}\textbf{0.45}$_{\color{BrickRed}{\text{+0.23}}}$
  & \cellcolor{lightgray}\textbf{0.47}$_{\color{BrickRed}{\text{+0.20}}}$ &
    \cellcolor{lightgray}0.76$_{\color{OliveGreen}{\text{-0.07}}}$ &
    \cellcolor{lightgray}\textbf{0.34}$_{\color{BrickRed}{\text{+0.18}}}$ \\
\cmidrule(l){2-12}
& \multirow{2}{*}{SparseGPT\,(50\%)} & w/o
  & 0.61 & 0.66 & 0.57
  & 0.62 & 0.67 & 0.58
  & 0.63 & 0.67 & 0.59 \\
& & \cellcolor{lightgray}w/
  & \cellcolor{lightgray}\textbf{0.65}$_{\color{BrickRed}{\text{+0.04}}}$ &
    \cellcolor{lightgray}\textbf{0.69}$_{\color{BrickRed}{\text{+0.03}}}$ &
    \cellcolor{lightgray}\textbf{0.61}$_{\color{BrickRed}{\text{+0.04}}}$
  & \cellcolor{lightgray}0.60$_{\color{OliveGreen}{\text{-0.02}}}$ &
    \cellcolor{lightgray}\textbf{0.70}$_{\color{BrickRed}{\text{+0.03}}}$ &
    \cellcolor{lightgray}0.52$_{\color{OliveGreen}{\text{-0.06}}}$
  & \cellcolor{lightgray}\textbf{0.66}$_{\color{BrickRed}{\text{+0.03}}}$ &
    \cellcolor{lightgray}\textbf{0.69}$_{\color{BrickRed}{\text{+0.02}}}$ &
    \cellcolor{lightgray}\textbf{0.63}$_{\color{BrickRed}{\text{+0.04}}}$ \\
\cmidrule(l){2-12}
& \multirow{2}{*}{Wanda\,(50\%)} & w/o
  & 0.62 & 0.67 & 0.58
  & 0.63 & 0.66 & 0.59
  & 0.63 & 0.66 & 0.59 \\
& & \cellcolor{lightgray}w/
  & \cellcolor{lightgray}\textbf{0.64}$_{\color{BrickRed}{\text{+0.02}}}$ &
    \cellcolor{lightgray}\textbf{0.68}$_{\color{BrickRed}{\text{+0.01}}}$ &
    \cellcolor{lightgray}\textbf{0.60}$_{\color{BrickRed}{\text{+0.02}}}$
  & \cellcolor{lightgray}\textbf{0.65}$_{\color{BrickRed}{\text{+0.02}}}$ &
    \cellcolor{lightgray}0.64$_{\color{OliveGreen}{\text{-0.02}}}$ &
    \cellcolor{lightgray}\textbf{0.66}$_{\color{BrickRed}{\text{+0.07}}}$
  & \cellcolor{lightgray}\textbf{0.65}$_{\color{BrickRed}{\text{+0.02}}}$ &
    \cellcolor{lightgray}\textbf{0.68}$_{\color{BrickRed}{\text{+0.02}}}$ &
    \cellcolor{lightgray}\textbf{0.63}$_{\color{BrickRed}{\text{+0.04}}}$ \\
\midrule
\multirow{10}{*}{SatImp} & Dense (0\%) & --
  & 0.70 & 0.75 & 0.65
  & 0.61 & 0.62 & 0.60
  & 0.60 & 0.61 & 0.58 \\
\cmidrule(l){2-12}
& \multirow{2}{*}{Mag.\,(50\%)} & w/o
  & 0.21 & 0.81 & 0.12
  & 0.13 & 0.82 & 0.07
  & 0.20 & 0.83 & 0.11 \\
& & \cellcolor{lightgray}w/
  & \cellcolor{lightgray}\textbf{0.52}$_{\color{BrickRed}{\text{+0.31}}}$ &
    \cellcolor{lightgray}0.76$_{\color{OliveGreen}{\text{-0.05}}}$ &
    \cellcolor{lightgray}\textbf{0.38}$_{\color{BrickRed}{\text{+0.26}}}$
  & \cellcolor{lightgray}\textbf{0.50}$_{\color{BrickRed}{\text{+0.37}}}$ &
    \cellcolor{lightgray}0.77$_{\color{OliveGreen}{\text{-0.05}}}$ &
    \cellcolor{lightgray}\textbf{0.37}$_{\color{BrickRed}{\text{+0.30}}}$
  & \cellcolor{lightgray}\textbf{0.35}$_{\color{BrickRed}{\text{+0.15}}}$ &
    \cellcolor{lightgray}0.76$_{\color{OliveGreen}{\text{-0.07}}}$ &
    \cellcolor{lightgray}\textbf{0.24}$_{\color{BrickRed}{\text{+0.13}}}$ \\
\cmidrule(l){2-12}
& \multirow{2}{*}{SparseGPT\,(50\%)} & w/o
  & 0.58 & 0.60 & 0.57
  & 0.58 & 0.60 & 0.57
  & 0.57 & 0.59 & 0.56 \\
& & \cellcolor{lightgray}w/
  & \cellcolor{lightgray}\textbf{0.60}$_{\color{BrickRed}{\text{+0.02}}}$ &
    \cellcolor{lightgray}\textbf{0.65}$_{\color{BrickRed}{\text{+0.05}}}$ &
    \cellcolor{lightgray}0.55$_{\color{OliveGreen}{\text{-0.02}}}$
  & \cellcolor{lightgray}0.56$_{\color{OliveGreen}{\text{-0.02}}}$ &
    \cellcolor{lightgray}\textbf{0.62}$_{\color{BrickRed}{\text{+0.02}}}$ &
    \cellcolor{lightgray}0.51$_{\color{OliveGreen}{\text{-0.06}}}$
  & \cellcolor{lightgray}\textbf{0.59}$_{\color{BrickRed}{\text{+0.02}}}$ &
    \cellcolor{lightgray}\textbf{0.61}$_{\color{BrickRed}{\text{+0.02}}}$ &
    \cellcolor{lightgray}\textbf{0.57}$_{\color{BrickRed}{\text{+0.01}}}$ \\
\cmidrule(l){2-12}
& \multirow{2}{*}{Wanda\,(50\%)} & w/o
  & 0.59 & 0.60 & 0.57
  & 0.59 & 0.60 & 0.57
  & 0.58 & 0.60 & 0.57 \\
& & \cellcolor{lightgray}w/
  & \cellcolor{lightgray}0.59$_{\color{Gray}{\text{+0.00}}}$ &
    \cellcolor{lightgray}0.59$_{\color{OliveGreen}{\text{-0.01}}}$ &
    \cellcolor{lightgray}0.57$_{\color{Gray}{\text{+0.00}}}$
  & \cellcolor{lightgray}0.57$_{\color{OliveGreen}{\text{-0.02}}}$ &
    \cellcolor{lightgray}\textbf{0.63}$_{\color{BrickRed}{\text{+0.03}}}$ &
    \cellcolor{lightgray}0.52$_{\color{OliveGreen}{\text{-0.05}}}$
  & \cellcolor{lightgray}0.58$_{\color{Gray}{\text{+0.00}}}$ &
    \cellcolor{lightgray}0.58$_{\color{OliveGreen}{\text{-0.02}}}$ &
    \cellcolor{lightgray}\textbf{0.59}$_{\color{BrickRed}{\text{+0.02}}}$ \\
\bottomrule
\end{tabular}%
}
\end{table*}

\begin{table*}[t]
\caption{Performance comparison under the unlearn-then-prune protocol on Llama-3.2-1B across TOFU benchmark (1\%, 5\%, 10\% forget settings).Models are first unlearned (with or without SAU), then pruned to 50\% sparsity using multiple sparsification techniques   (Magnitude, SparseGPT, Wanda) . $\uparrow$ indicates higher is better.}
\centering
\scriptsize
\label{tab:llama-3.2-1b-tofu-all}
\resizebox{0.99\textwidth}{!}{%
\setlength{\tabcolsep}{5pt}%
\begin{tabular}{@{}l c c ccc ccc ccc@{}}
\toprule
\textbf{Method} & \textbf{Sparsity} & \textbf{SAU}
& \multicolumn{3}{c}{\textbf{Forget-1\%}}
& \multicolumn{3}{c}{\textbf{Forget-5\%}}
& \multicolumn{3}{c}{\textbf{Forget-10\%}} \\
\cmidrule(lr){4-6} \cmidrule(lr){7-9} \cmidrule(lr){10-12}
& & & Agg.\,$\uparrow$ & Mem.\,$\uparrow$ & Util.\,$\uparrow$
& Agg.\,$\uparrow$ & Mem.\,$\uparrow$ & Util.\,$\uparrow$
& Agg.\,$\uparrow$ & Mem.\,$\uparrow$ & Util.\,$\uparrow$ \\
\midrule
\textit{Original} & \textit{Dense (0\%)} & --
  & 0.06 & 0.03 & 0.73
  & 0.06 & 0.03 & 0.73
  & 0.02 & 0.01 & 0.70 \\
\textit{Retrain} & \textit{Dense (0\%)} & --
  & 0.73 & 0.72 & 0.75
  & 0.72 & 0.70 & 0.75
  & 0.68 & 0.65 & 0.70 \\
\midrule
\multirow{10}{*}{GradDiff} & Dense (0\%) & --
  & 0.58 & 0.56 & 0.61
  & 0.51 & 0.66 & 0.41
  & 0.44 & 0.36 & 0.54 \\
\cmidrule(l){2-12}
& \multirow{2}{*}{Mag.\,(50\%)} & w/o
  & 0.01 & 0.96 & 0.01
  & 0.02 & 0.96 & 0.01
  & 0.01 & 0.96 & 0.01 \\
& & \cellcolor{lightgray}w/
  & \cellcolor{lightgray}0.01$_{\color{Gray}{\text{+0.00}}}$ & 
    \cellcolor{lightgray}0.96$_{\color{Gray}{\text{+0.00}}}$ & 
    \cellcolor{lightgray}0.01$_{\color{Gray}{\text{+0.00}}}$
  & \cellcolor{lightgray}0.01$_{\color{Gray}{\text{+0.00}}}$ & 
    \cellcolor{lightgray}0.96$_{\color{Gray}{\text{+0.00}}}$ & 
    \cellcolor{lightgray}0.01$_{\color{Gray}{\text{+0.00}}}$
  & \cellcolor{lightgray}0.01$_{\color{Gray}{\text{+0.00}}}$ & 
    \cellcolor{lightgray}0.96$_{\color{Gray}{\text{+0.00}}}$ & 
    \cellcolor{lightgray}0.01$_{\color{Gray}{\text{+0.00}}}$ \\
\cmidrule(l){2-12}
  & \multirow{2}{*}{SparseGPT\,(50\%)} & w/o
  & 0.52 & 0.58 & 0.47
  & 0.45 & 0.53 & 0.39
  & 0.44 & 0.43 & 0.46 \\
& & \cellcolor{lightgray}w/
  & \cellcolor{lightgray}0.55$_{\color{OliveGreen}{\text{-0.03}}}$ & 
    \cellcolor{lightgray}\textbf{0.61}$_{\color{BrickRed}{\text{+0.03}}}$ & 
    \cellcolor{lightgray}\textbf{0.50}$_{\color{BrickRed}{\text{+0.03}}}$
  & \cellcolor{lightgray}\textbf{0.47}$_{\color{BrickRed}{\text{+0.02}}}$ & 
    \cellcolor{lightgray}\textbf{0.55}$_{\color{BrickRed}{\text{+0.02}}}$ & 
    \cellcolor{lightgray}\textbf{0.40}$_{\color{BrickRed}{\text{+0.01}}}$
  & \cellcolor{lightgray}0.44$_{\color{Gray}{\text{+0.00}}}$ & 
    \cellcolor{lightgray}\textbf{0.44}$_{\color{BrickRed}{\text{+0.01}}}$ & 
    \cellcolor{lightgray}0.45$_{\color{Gray}{\text{+0.00}}}$ \\
\cmidrule(l){2-12}
& \multirow{2}{*}{Wanda\,(50\%)} & w/o
  & 0.48 & 0.66 & 0.37
  & 0.46 & 0.70 & 0.34
  & 0.42 & 0.50 & 0.36 \\
& & \cellcolor{lightgray}w/
  & \cellcolor{lightgray}\textbf{0.53}$_{\color{BrickRed}{\text{+0.05}}}$ & 
    \cellcolor{lightgray}\textbf{0.68}$_{\color{BrickRed}{\text{+0.02}}}$ & 
    \cellcolor{lightgray}\textbf{0.42}$_{\color{BrickRed}{\text{+0.05}}}$
  & \cellcolor{lightgray}0.44$_{\color{OliveGreen}{\text{-0.02}}}$ & 
    \cellcolor{lightgray}0.68$_{\color{OliveGreen}{\text{-0.02}}}$ & 
    \cellcolor{lightgray}0.32$_{\color{OliveGreen}{\text{-0.02}}}$
  & \cellcolor{lightgray}0.42$_{\color{Gray}{\text{+0.00}}}$ & 
    \cellcolor{lightgray}\textbf{0.51}$_{\color{BrickRed}{\text{+0.01}}}$ & 
    \cellcolor{lightgray}0.36$_{\color{Gray}{\text{+0.00}}}$ \\
\midrule
\multirow{10}{*}{NPO} & Dense (0\%) & --
  & 0.64 & 0.64 & 0.63
  & 0.50 & 0.50 & 0.51
  & 0.45 & 0.41 & 0.50 \\
\cmidrule(l){2-12}
& \multirow{2}{*}{Mag.\,(50\%)} & w/o
  & 0.01 & 0.97 & 0.01
  & 0.45 & 0.34 & 0.68
  & 0.02 & 0.96 & 0.01 \\
& & \cellcolor{lightgray}w/
  & \cellcolor{lightgray}0.01$_{\color{Gray}{\text{+0.00}}}$ & 
    \cellcolor{lightgray}0.96$_{\color{OliveGreen}{\text{-0.01}}}$ & 
    \cellcolor{lightgray}0.01$_{\color{Gray}{\text{+0.00}}}$
  & \cellcolor{lightgray}0.42$_{\color{OliveGreen}{\text{-0.03}}}$ & 
    \cellcolor{lightgray}\textbf{0.38}$_{\color{BrickRed}{\text{+0.04}}}$ & 
    \cellcolor{lightgray}0.47$_{\color{OliveGreen}{\text{-0.21}}}$
  & \cellcolor{lightgray}0.01$_{\color{OliveGreen}{\text{-0.01}}}$ & 
    \cellcolor{lightgray}0.96$_{\color{Gray}{\text{+0.00}}}$ & 
    \cellcolor{lightgray}0.01$_{\color{Gray}{\text{+0.00}}}$ \\
\cmidrule(l){2-12}
& \multirow{2}{*}{SparseGPT\,(50\%)} & w/o
  & 0.53 & 0.72 & 0.42
  & 0.48 & 0.56 & 0.41
  & 0.44 & 0.44 & 0.44 \\
& & \cellcolor{lightgray}w/
  & \cellcolor{lightgray}\textbf{0.58}$_{\color{BrickRed}{\text{+0.05}}}$ & 
    \cellcolor{lightgray}\textbf{0.74}$_{\color{BrickRed}{\text{+0.02}}}$ & 
    \cellcolor{lightgray}\textbf{0.46}$_{\color{BrickRed}{\text{+0.04}}}$
  & \cellcolor{lightgray}\textbf{0.50}$_{\color{BrickRed}{\text{+0.02}}}$ & 
    \cellcolor{lightgray}0.54$_{\color{OliveGreen}{\text{-0.02}}}$ & 
    \cellcolor{lightgray}\textbf{0.46}$_{\color{BrickRed}{\text{+0.05}}}$
  & \cellcolor{lightgray}0.43$_{\color{OliveGreen}{\text{-0.01}}}$ & 
    \cellcolor{lightgray}0.44$_{\color{Gray}{\text{+0.00}}}$ & 
    \cellcolor{lightgray}0.43$_{\color{OliveGreen}{\text{-0.01}}}$ \\
\cmidrule(l){2-12}
& \multirow{2}{*}{Wanda\,(50\%)} & w/o
  & 0.44 & 0.75 & 0.31
  & 0.43 & 0.72 & 0.30
  & 0.41 & 0.45 & 0.35 \\
& & \cellcolor{lightgray}w/
  & \cellcolor{lightgray}\textbf{0.50}$_{\color{BrickRed}{\text{+0.06}}}$ & 
    \cellcolor{lightgray}\textbf{0.76}$_{\color{BrickRed}{\text{+0.01}}}$ & 
    \cellcolor{lightgray}\textbf{0.35}$_{\color{BrickRed}{\text{+0.04}}}$
  & \cellcolor{lightgray}\textbf{0.46}$_{\color{BrickRed}{\text{+0.03}}}$ & 
    \cellcolor{lightgray}0.70$_{\color{OliveGreen}{\text{-0.02}}}$ & 
    \cellcolor{lightgray}\textbf{0.34}$_{\color{BrickRed}{\text{+0.04}}}$
  & \cellcolor{lightgray}\textbf{0.42}$_{\color{BrickRed}{\text{+0.01}}}$ & 
    \cellcolor{lightgray}\textbf{0.47}$_{\color{BrickRed}{\text{+0.02}}}$ & 
    \cellcolor{lightgray}\textbf{0.38}$_{\color{BrickRed}{\text{+0.03}}}$ \\
\midrule

\multirow{10}{*}{UNDIAL} & Dense (0\%) & --
  & 0.70 & 0.73 & 0.67
  & 0.69 & 0.76 & 0.64
  & 0.67 & 0.66 & 0.68 \\
\cmidrule(l){2-12}
& \multirow{2}{*}{Mag.\,(50\%)} & w/o
  & 0.01 & 0.96 & 0.01
  & 0.01 & 0.52 & 0.00
  & 0.01 & 0.97 & 0.01 \\
& & \cellcolor{lightgray}w/
  & \cellcolor{lightgray}0.01$_{\color{Gray}{\text{+0.00}}}$ & 
    \cellcolor{lightgray}0.96$_{\color{Gray}{\text{+0.00}}}$ & 
    \cellcolor{lightgray}0.01$_{\color{Gray}{\text{+0.00}}}$
  & \cellcolor{lightgray}\textbf{0.02}$_{\color{BrickRed}{\text{+0.01}}}$ & 
    \cellcolor{lightgray}\textbf{0.60}$_{\color{BrickRed}{\text{+0.08}}}$ & 
    \cellcolor{lightgray}\textbf{0.01}$_{\color{BrickRed}{\text{+0.01}}}$
  & \cellcolor{lightgray}0.01$_{\color{Gray}{\text{+0.00}}}$ & 
    \cellcolor{lightgray}0.97$_{\color{Gray}{\text{+0.00}}}$ & 
    \cellcolor{lightgray}0.01$_{\color{Gray}{\text{+0.00}}}$ \\
\cmidrule(l){2-12}
& \multirow{2}{*}{SparseGPT\,(50\%)} & w/o
  & 0.55 & 0.72 & 0.45
  & 0.56 & 0.71 & 0.46
  & 0.55 & 0.71 & 0.45 \\
& & \cellcolor{lightgray}w/
  & \cellcolor{lightgray}\textbf{0.59}$_{\color{BrickRed}{\text{+0.04}}}$ & 
    \cellcolor{lightgray}\textbf{0.74}$_{\color{BrickRed}{\text{+0.02}}}$ & 
    \cellcolor{lightgray}\textbf{0.48}$_{\color{BrickRed}{\text{+0.03}}}$
  & \cellcolor{lightgray}\textbf{0.58}$_{\color{BrickRed}{\text{+0.02}}}$ & 
    \cellcolor{lightgray}\textbf{0.72}$_{\color{BrickRed}{\text{+0.01}}}$ & 
    \cellcolor{lightgray}\textbf{0.48}$_{\color{BrickRed}{\text{+0.02}}}$
  & \cellcolor{lightgray}\textbf{0.59}$_{\color{BrickRed}{\text{+0.04}}}$ & 
    \cellcolor{lightgray}0.70$_{\color{OliveGreen}{\text{-0.01}}}$ & 
    \cellcolor{lightgray}\textbf{0.50}$_{\color{BrickRed}{\text{+0.05}}}$ \\
\cmidrule(l){2-12}
& \multirow{2}{*}{Wanda\,(50\%)} & w/o
  & 0.50 & 0.72 & 0.38
  & 0.51 & 0.74 & 0.39
  & 0.47 & 0.71 & 0.35 \\
& & \cellcolor{lightgray}w/
  & \cellcolor{lightgray}\textbf{0.57}$_{\color{BrickRed}{\text{+0.07}}}$ & 
    \cellcolor{lightgray}\textbf{0.73}$_{\color{BrickRed}{\text{+0.01}}}$ & 
    \cellcolor{lightgray}\textbf{0.45}$_{\color{BrickRed}{\text{+0.07}}}$
  & \cellcolor{lightgray}\textbf{0.56}$_{\color{BrickRed}{\text{+0.05}}}$ & 
    \cellcolor{lightgray}0.72$_{\color{OliveGreen}{\text{-0.02}}}$ & 
    \cellcolor{lightgray}\textbf{0.46}$_{\color{BrickRed}{\text{+0.07}}}$
  & \cellcolor{lightgray}\textbf{0.56}$_{\color{BrickRed}{\text{+0.09}}}$ & 
    \cellcolor{lightgray}0.71$_{\color{Gray}{\text{+0.00}}}$ & 
    \cellcolor{lightgray}\textbf{0.45}$_{\color{BrickRed}{\text{+0.10}}}$ \\
\midrule

\multirow{10}{*}{SatImp} & Dense (0\%) & --
  & 0.55 & 0.91 & 0.39
  & 0.45 & 0.34 & 0.68
  & 0.41 & 0.31 & 0.61 \\
\cmidrule(l){2-12}
& \multirow{2}{*}{Mag.\,(50\%)} & w/o
  & 0.13 & 0.98 & 0.07
  & 0.02 & 0.96 & 0.01
  & 0.01 & 0.96 & 0.01 \\
& & \cellcolor{lightgray}w/
  & \cellcolor{lightgray}0.08$_{\color{OliveGreen}{\text{-0.05}}}$ & 
    \cellcolor{lightgray}0.98$_{\color{Gray}{\text{+0.00}}}$ & 
    \cellcolor{lightgray}0.04$_{\color{OliveGreen}{\text{-0.03}}}$
  & \cellcolor{lightgray}0.01$_{\color{OliveGreen}{\text{-0.01}}}$ & 
    \cellcolor{lightgray}0.96$_{\color{Gray}{\text{+0.00}}}$ & 
    \cellcolor{lightgray}0.01$_{\color{Gray}{\text{+0.00}}}$
  & \cellcolor{lightgray}0.01$_{\color{Gray}{\text{+0.00}}}$ & 
    \cellcolor{lightgray}0.96$_{\color{Gray}{\text{+0.00}}}$ & 
    \cellcolor{lightgray}0.01$_{\color{Gray}{\text{+0.00}}}$ \\
\cmidrule(l){2-12}
& \multirow{2}{*}{SparseGPT\,(50\%)} & w/o
  & 0.41 & 0.79 & 0.28
  & 0.44 & 0.45 & 0.44
  & 0.41 & 0.40 & 0.43 \\
& & \cellcolor{lightgray}w/
  & \cellcolor{lightgray}\textbf{0.51}$_{\color{BrickRed}{\text{+0.10}}}$ & 
    \cellcolor{lightgray}\textbf{0.82}$_{\color{BrickRed}{\text{+0.03}}}$ & 
    \cellcolor{lightgray}\textbf{0.37}$_{\color{BrickRed}{\text{+0.09}}}$
  & \cellcolor{lightgray}\textbf{0.47}$_{\color{BrickRed}{\text{+0.03}}}$ & 
    \cellcolor{lightgray}\textbf{0.50}$_{\color{BrickRed}{\text{+0.05}}}$ & 
    \cellcolor{lightgray}0.44$_{\color{Gray}{\text{+0.00}}}$
  & \cellcolor{lightgray}0.41$_{\color{Gray}{\text{+0.00}}}$ & 
    \cellcolor{lightgray}0.40$_{\color{Gray}{\text{+0.00}}}$ & 
    \cellcolor{lightgray}0.42$_{\color{OliveGreen}{\text{-0.01}}}$ \\
\cmidrule(l){2-12}
& \multirow{2}{*}{Wanda\,(50\%)} & w/o
  & 0.34 & 0.80 & 0.22
  & 0.44 & 0.70 & 0.32
  & 0.40 & 0.45 & 0.35 \\
& & \cellcolor{lightgray}w/
  & \cellcolor{lightgray}\textbf{0.47}$_{\color{BrickRed}{\text{+0.13}}}$ & 
    \cellcolor{lightgray}\textbf{0.83}$_{\color{BrickRed}{\text{+0.03}}}$ & 
    \cellcolor{lightgray}\textbf{0.33}$_{\color{BrickRed}{\text{+0.11}}}$
  & \cellcolor{lightgray}\textbf{0.45}$_{\color{BrickRed}{\text{+0.01}}}$ & 
    \cellcolor{lightgray}0.68$_{\color{OliveGreen}{\text{-0.02}}}$ & 
    \cellcolor{lightgray}\textbf{0.34}$_{\color{BrickRed}{\text{+0.02}}}$
  & \cellcolor{lightgray}0.39$_{\color{OliveGreen}{\text{-0.01}}}$ & 
    \cellcolor{lightgray}\textbf{0.46}$_{\color{BrickRed}{\text{+0.01}}}$ & 
    \cellcolor{lightgray}0.34$_{\color{OliveGreen}{\text{-0.01}}}$ \\
 
\bottomrule
\end{tabular}%
}
\label{tab:llama-3.2-1b-tofu-all}
\end{table*}

\noindent\textbf{Effect of Top-k Ratio for Gradient Masking.}
We evaluate the impact of the top-$k$ ratio hyperparameter on SAU's performance. As shown in Figure~\ref{fig:topk_ratio}, we vary the top-$k$ ratio across 0.1, 0.3, and 0.5 on TOFU Forget-10\% with Llama-3.1-8B at 50\% Magnitude pruning. The results demonstrate that top-$k=0.3$ consistently achieves the best performance across all four baseline methods, with an average aggregate score of 0.54 compared to 0.51 and 0.52 for top-$k=0.1$ and top-$k=0.5$ respectively. A ratio of 0.1 proves overly conservative, restricting updates to too few parameters, while 0.5 dilutes the unlearning signal across too many weights. The consistent trend validates that top-$k=0.3$ strikes an optimal balance between focusing on critical weights and maintaining sufficient parameter budget.

\begin{figure}[t]
\centering
   \begin{subfigure}[b]{0.211\textwidth}
       \includegraphics[width=\textwidth]{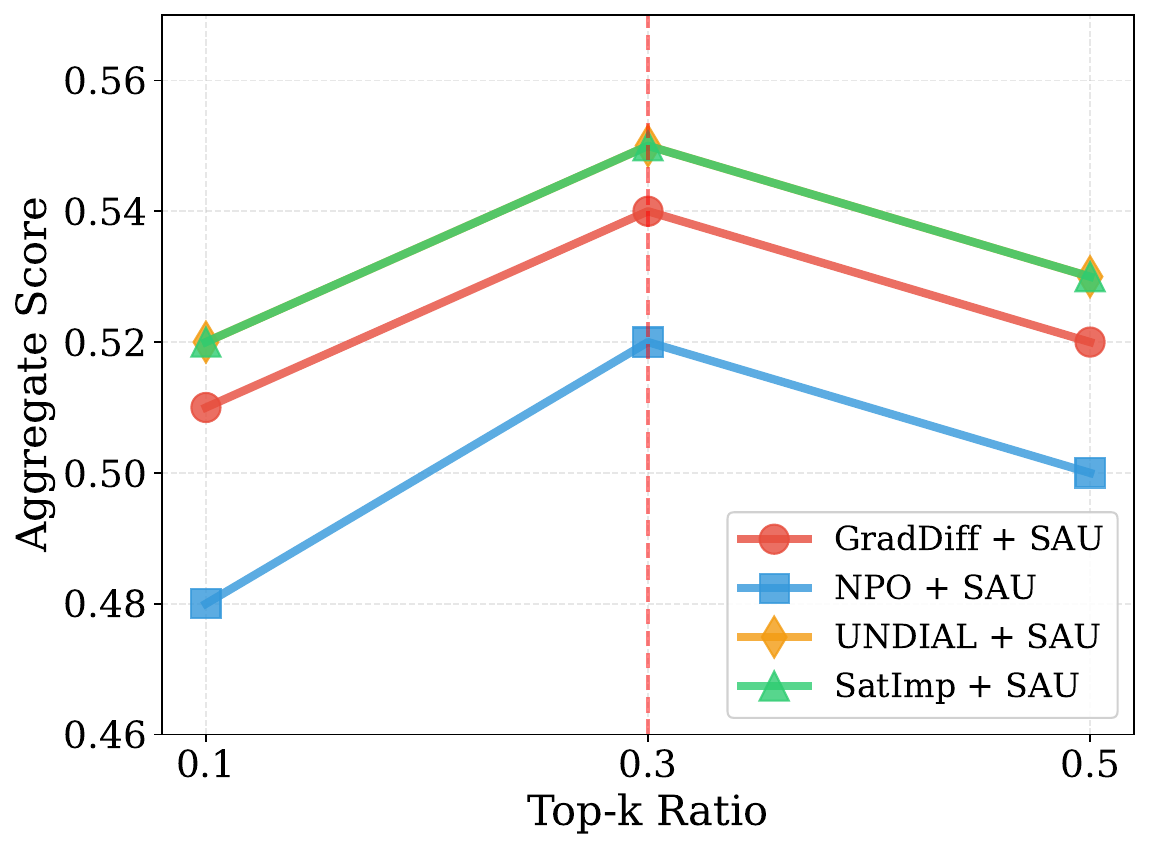}
       \caption{Effect of Top-k Ratio.}
       \label{fig:topk_ratio}
   \end{subfigure}
   \begin{subfigure}[b]{0.265\textwidth}
       \includegraphics[width=\textwidth]{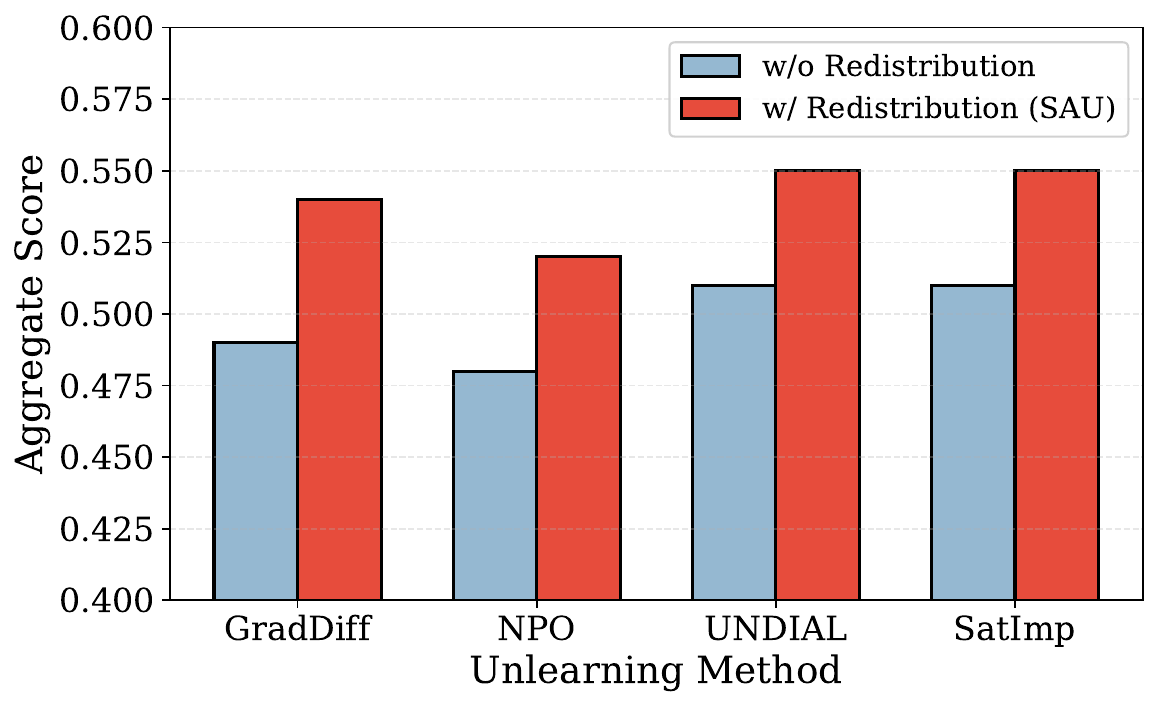}
       \caption{Effect of Redistribution.}
       \label{fig:retribution_ablation}
    \end{subfigure}

    \caption{Ablation studies on (a) top-k ratio and (b) importance-aware redistribution effect on TOFU Forget-10\% with Llama-3.1-8B at 50\% sparsity.}
\end{figure}

\noindent\textbf{Effect of Redistribution.}
We investigate the contribution of importance-aware weight redistribution by comparing SAU with a variant that only applies gradient masking. As shown in Figure~\ref{fig:retribution_ablation}, we evaluate both configurations on TOFU Forget-10\% with Llama-3.1-8B at 50\% Magnitude pruning. The full SAU with redistribution consistently outperforms the masking-only variant across all baseline methods, achieving an average aggregate score improvement of 8.5\%. Most notably, the redistribution mechanism substantially enhances utility preservation: GradDiff with redistribution achieves 0.62 utility compared to 0.47 without, representing a 32\% improvement. Similar patterns emerge across NPO, UNDIAL, and SatImp. This validates that importance-aware redistribution effectively compensates for lost unlearning capacity by amplifying gradient signals on surviving critical parameters.

\begin{figure}[htpb]
\centering
   \begin{subfigure}[b]{0.24\textwidth}
       \includegraphics[width=\textwidth]{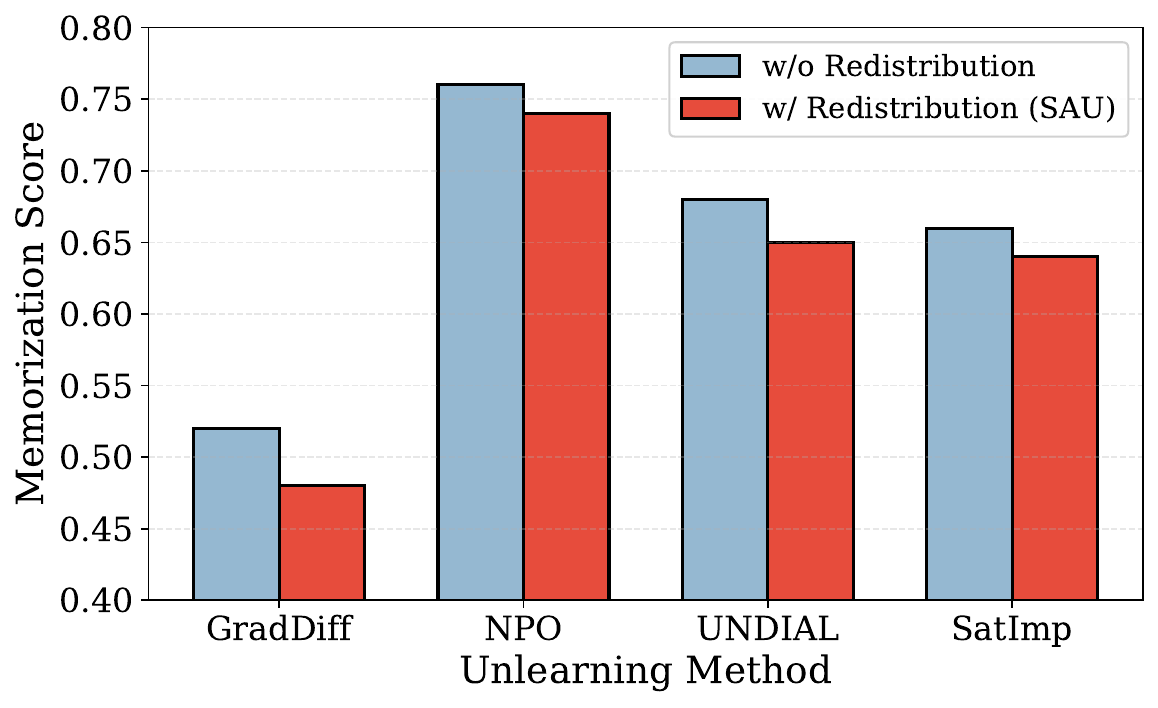}
       \caption{}
       \label{fig5:topk_ratio_memorization}
   \end{subfigure}
   \begin{subfigure}[b]{0.24\textwidth}
       \includegraphics[width=\textwidth]{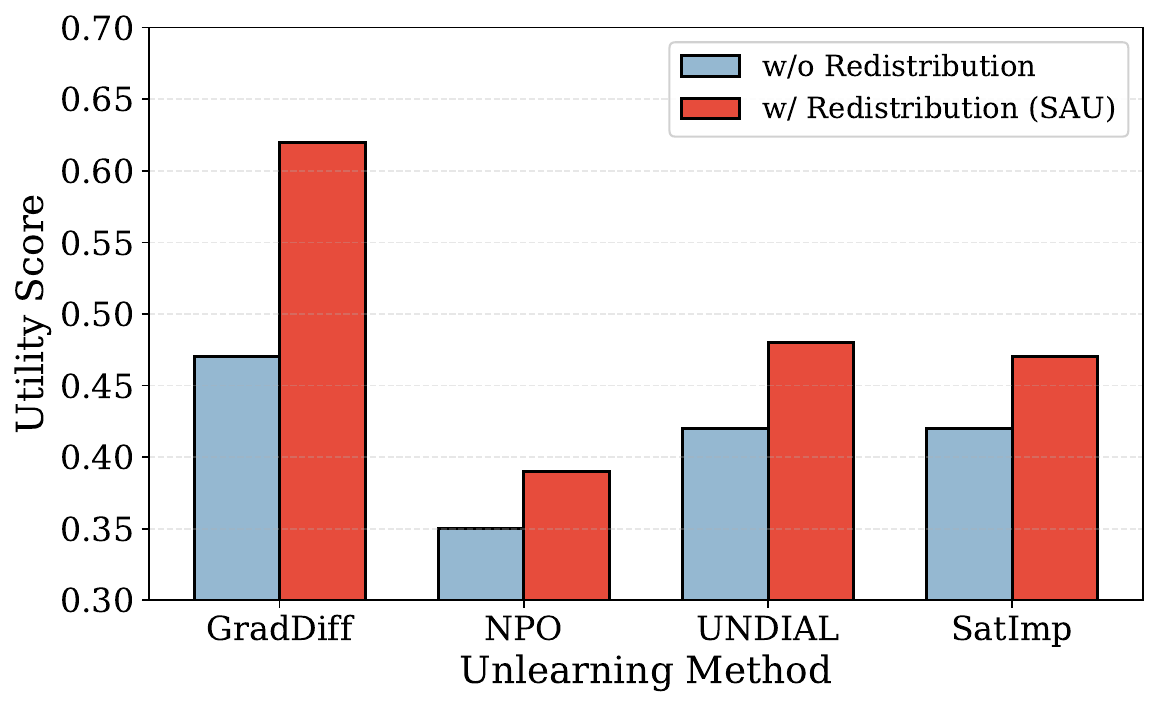}
       \caption{}
       \label{fig5:topk_ratio_utility}
    \end{subfigure}

    \caption{Ablation study on importance-aware weight redistribution. (a) Memorization scores: SAU with redistribution achieves lower memorization (better forgetting) across all methods. (b) Utility scores: redistribution substantially improves utility preservation, with an average 32\% improvement.}
\end{figure}

The results demonstrate that importance-aware weight redistribution provides substantial benefits across both forgetting and retention metrics. For memorization (Figure~\ref{fig5:topk_ratio_memorization}), SAU with redistribution consistently achieves lower scores compared to gradient masking alone, indicating more effective forgetting. Notably, GradDiff improves from 0.52 to 0.48, NPO from 0.76 to 0.74, UNDIAL from 0.68 to 0.65, and SatImp from 0.66 to 0.64.

More significantly, redistribution substantially enhances utility preservation, as shown in Figure~\ref{fig5:topk_ratio_utility}. GradDiff with redistribution achieves 0.62 utility compared to 0.47 without, representing a 32\% improvement. Similar patterns emerge across all methods: NPO improves from 0.35 to 0.39 with an 11\% gain, UNDIAL increases from 0.42 to 0.48 with a 14\% improvement, and SatImp rises from 0.42 to 0.47 with a 12\% boost. This validates that importance-aware redistribution effectively compensates for lost unlearning capacity by amplifying gradient signals on surviving critical parameters, achieving better forgetting while simultaneously improving retention rather than trading one for the other.

\section{Conclusion}

In this paper, we investigate the degradation of machine unlearning effectiveness on sparsified LLMs. Our analysis reveals that existing unlearning methods require updating all parameters, yet sparsification prunes substantial weights to zero, limiting forgetting capacity. To address this, we propose Sparsity-Aware Unlearning (SAU), which employs gradient masking to redirect updates to surviving weights and importance-aware redistribution to compensate for pruned parameters. Experiments demonstrate that SAU significantly outperforms existing methods on sparse LLMs, achieving effective forgetting while preserving utility. Our work bridges machine unlearning and model compression, enabling privacy-preserving LLM deployment under resource constraints.


\begin{thebibliography}{10}
\providecommand{\url}[1]{#1}
\csname url@samestyle\endcsname
\providecommand{\newblock}{\relax}
\providecommand{\bibinfo}[2]{#2}
\providecommand{\BIBentrySTDinterwordspacing}{\spaceskip=0pt\relax}
\providecommand{\BIBentryALTinterwordstretchfactor}{4}
\providecommand{\BIBentryALTinterwordspacing}{\spaceskip=\fontdimen2\font plus
\BIBentryALTinterwordstretchfactor\fontdimen3\font minus \fontdimen4\font\relax}
\providecommand{\BIBforeignlanguage}[2]{{%
\expandafter\ifx\csname l@#1\endcsname\relax
\typeout{** WARNING: IEEEtran.bst: No hyphenation pattern has been}%
\typeout{** loaded for the language `#1'. Using the pattern for}%
\typeout{** the default language instead.}%
\else
\language=\csname l@#1\endcsname
\fi
#2}}
\providecommand{\BIBdecl}{\relax}
\BIBdecl

\bibitem{grattafiori2024llama}
A.~Grattafiori, A.~Dubey, A.~Jauhri, A.~Pandey, A.~Kadian, A.~Al-Dahle, A.~Letman, A.~Mathur, A.~Schelten, A.~Vaughan \emph{et~al.}, ``The {Llama} 3 herd of models,'' \emph{arXiv preprint arXiv:2407.21783}, 2024.

\bibitem{achiam2023gpt}
J.~Achiam, S.~Adler, S.~Agarwal, L.~Ahmad, I.~Akkaya, F.~L. Aleman, D.~Almeida, J.~Altenschmidt, S.~Altman, S.~Anadkat \emph{et~al.}, ``Gpt-4 technical report,'' \emph{arXiv preprint arXiv:2303.08774}, 2023.

\bibitem{team2024gemma}
\BIBentryALTinterwordspacing
G.~Team, T.~Mesnard, C.~Hardin, R.~Dadashi, S.~Bhupatiraju, S.~Pathak, L.~Sifre, M.~Rivière, M.~S. Kale, J.~Love, P.~Tafti, L.~Hussenot, P.~G. Sessa, A.~Chowdhery, A.~Roberts, A.~Barua, A.~Botev, A.~Castro-Ros, A.~Slone, A.~Héliou, A.~Tacchetti, A.~Bulanova, A.~Paterson, B.~Tsai, B.~Shahriari, C.~L. Lan, C.~A. Choquette-Choo, C.~Crepy, D.~Cer, D.~Ippolito, D.~Reid, E.~Buchatskaya, E.~Ni, E.~Noland, G.~Yan, G.~Tucker, G.-C. Muraru, G.~Rozhdestvenskiy, H.~Michalewski, I.~Tenney, I.~Grishchenko, J.~Austin, J.~Keeling, J.~Labanowski, J.-B. Lespiau, J.~Stanway, J.~Brennan, J.~Chen, J.~Ferret, J.~Chiu, J.~Mao-Jones, K.~Lee, K.~Yu, K.~Millican, L.~L. Sjoesund, L.~Lee, L.~Dixon, M.~Reid, M.~Mikuła, M.~Wirth, M.~Sharman, N.~Chinaev, N.~Thain, O.~Bachem, O.~Chang, O.~Wahltinez, P.~Bailey, P.~Michel, P.~Yotov, R.~Chaabouni, R.~Comanescu, R.~Jana, R.~Anil, R.~McIlroy, R.~Liu, R.~Mullins, S.~L. Smith, S.~Borgeaud, S.~Girgin, S.~Douglas, S.~Pandya, S.~Shakeri, S.~De, T.~Klimenko, T.~Hennigan, V.~Feinberg, W.~Stokowiec,
  Y.~hui Chen, Z.~Ahmed, Z.~Gong, T.~Warkentin, L.~Peran, M.~Giang, C.~Farabet, O.~Vinyals, J.~Dean, K.~Kavukcuoglu, D.~Hassabis, Z.~Ghahramani, D.~Eck, J.~Barral, F.~Pereira, E.~Collins, A.~Joulin, N.~Fiedel, E.~Senter, A.~Andreev, and K.~Kenealy, ``Gemma: Open models based on gemini research and technology,'' 2024. [Online]. Available: \url{https://arxiv.org/abs/2403.08295}
\BIBentrySTDinterwordspacing

\bibitem{das2024security}
B.~C. Das, M.~H. Amini, and Y.~Wu, ``Security and privacy challenges of large language models: A survey,'' \emph{arXiv preprint arXiv:2402.00888}, 2024.

\bibitem{henderson2023foundation}
P.~Henderson, X.~Li, D.~Jurafsky, T.~Hashimoto, M.~A. Lemley, and P.~Liang, ``Foundation models and fair use,'' \emph{Journal of Machine Learning Research}, vol.~24, no. 400, pp. 1--79, 2023.

\bibitem{nasr2025scalable}
M.~Nasr, J.~Rando, N.~Carlini, J.~Hayase, M.~Jagielski, A.~F. Cooper, D.~Ippolito, C.~A. Choquette-Choo, F.~Tram{\`e}r, and K.~Lee, ``Scalable extraction of training data from aligned, production language models,'' in \emph{The Thirteenth International Conference on Learning Representations}, 2025.

\bibitem{yao2024survey}
Y.~Yao, J.~Duan, K.~Xu, Y.~Cai, Z.~Sun, and Y.~Zhang, ``A survey on large language model (llm) security and privacy: The good, the bad, and the ugly,'' \emph{High-Confidence Computing}, vol.~4, no.~2, p. 100211, 2024.

\bibitem{voigt2017eu}
P.~Voigt and A.~Von~dem Bussche, ``The eu general data protection regulation (gdpr),'' \emph{A practical guide, 1st ed., Cham: Springer International Publishing}, vol.~10, no. 3152676, pp. 10--5555, 2017.

\bibitem{zaeem2020effect}
R.~N. Zaeem and K.~S. Barber, ``The effect of the gdpr on privacy policies: Recent progress and future promise,'' \emph{ACM Transactions on Management Information Systems (TMIS)}, vol.~12, no.~1, pp. 1--20, 2020.

\bibitem{hadi2023survey}
M.~U. Hadi, R.~Qureshi, A.~Shah, M.~Irfan, A.~Zafar, M.~B. Shaikh, N.~Akhtar, J.~Wu, S.~Mirjalili \emph{et~al.}, ``A survey on large language models: Applications, challenges, limitations, and practical usage,'' \emph{Authorea Preprints}, 2023.

\bibitem{zhang2023right}
D.~Zhang, P.~Finckenberg-Broman, T.~Hoang, S.~Pan, Z.~Xing, M.~Staples, and X.~Xu, ``Right to be forgotten in the era of large language models: Implications, challenges, and solutions,'' \emph{arXiv preprint arXiv:2307.03941}, 2023.

\bibitem{ma2023llm}
X.~Ma, G.~Fang, and X.~Wang, ``Llm-pruner: On the structural pruning of large language models,'' \emph{Advances in neural information processing systems}, vol.~36, pp. 21\,702--21\,720, 2023.

\bibitem{sreenivas2024llm}
S.~T. Sreenivas, S.~Muralidharan, R.~Joshi, M.~Chochowski, A.~S. Mahabaleshwarkar, G.~Shen, J.~Zeng, Z.~Chen, Y.~Suhara, S.~Diao \emph{et~al.}, ``Llm pruning and distillation in practice: The minitron approach,'' \emph{arXiv preprint arXiv:2408.11796}, 2024.

\bibitem{cao2015towards}
Y.~Cao and J.~Yang, ``Towards making systems forget with machine unlearning,'' in \emph{2015 IEEE symposium on security and privacy}.\hskip 1em plus 0.5em minus 0.4em\relax IEEE, 2015, pp. 463--480.

\bibitem{bourtoule2021machine}
L.~Bourtoule, V.~Chandrasekaran, C.~A. Choquette-Choo, H.~Jia, A.~Travers, B.~Zhang, D.~Lie, and N.~Papernot, ``Machine unlearning,'' in \emph{2021 IEEE symposium on security and privacy (SP)}.\hskip 1em plus 0.5em minus 0.4em\relax IEEE, 2021, pp. 141--159.

\bibitem{graves2021amnesiac}
L.~Graves, V.~Nagisetty, and V.~Ganesh, ``Amnesiac machine learning,'' in \emph{Proceedings of the AAAI Conference on Artificial Intelligence}, vol.~35, no.~13, 2021, pp. 11\,516--11\,524.

\bibitem{thudi2022unrolling}
A.~Thudi, G.~Deza, V.~Chandrasekaran, and N.~Papernot, ``Unrolling sgd: Understanding factors influencing machine unlearning,'' in \emph{2022 IEEE 7th European Symposium on Security and Privacy (EuroS\&P)}.\hskip 1em plus 0.5em minus 0.4em\relax IEEE, 2022, pp. 303--319.

\bibitem{10607903}
H.~Xu, T.~Zhu, W.~Zhou, and W.~Zhao, ``Don't forget too much: Towards machine unlearning on feature level,'' \emph{IEEE Transactions on Dependable and Secure Computing}, vol.~22, no.~2, pp. 1313--1328, 2025.

\bibitem{koh2017understanding}
P.~W. Koh and P.~Liang, ``Understanding black-box predictions via influence functions,'' in \emph{International conference on machine learning}.\hskip 1em plus 0.5em minus 0.4em\relax PMLR, 2017, pp. 1885--1894.

\bibitem{izzo2021approximate}
Z.~Izzo, M.~A. Smart, K.~Chaudhuri, and J.~Zou, ``Approximate data deletion from machine learning models,'' in \emph{International Conference on Artificial Intelligence and Statistics}.\hskip 1em plus 0.5em minus 0.4em\relax PMLR, 2021, pp. 2008--2016.

\bibitem{11202435}
H.~Xu, T.~Zhu, L.~Zhang, and W.~Zhou, ``Really unlearned? verifying machine unlearning via influential sample pairs,'' \emph{IEEE Transactions on Dependable and Secure Computing}, vol.~23, no.~1, pp. 1671--1686, 2026.

\bibitem{liu2024large}
C.~Y. Liu, Y.~Wang, J.~Flanigan, and Y.~Liu, ``Large language model unlearning via embedding-corrupted prompts,'' \emph{arXiv preprint arXiv:2406.07933}, 2024.

\bibitem{gao2024practical}
C.~Gao, L.~Wang, C.~Weng, X.~Wang, and Q.~Zhu, ``Practical unlearning for large language models,'' \emph{arXiv preprint arXiv:2407.10223}, 2024.

\bibitem{yao2023large}
Y.~Yao, X.~Xu, and Y.~Liu, ``Large language model unlearning,'' in \emph{Socially Responsible Language Modelling Research}, 2023.

\bibitem{chen2023unlearn}
J.~Chen and D.~Yang, ``Unlearn what you want to forget: Efficient unlearning for {LLMs},'' in \emph{The 2023 Conference on Empirical Methods in Natural Language Processing}, 2023.

\bibitem{barbulescu2024textualsequenceownimproving}
\BIBentryALTinterwordspacing
G.-O. Barbulescu and P.~Triantafillou, ``To each (textual sequence) its own: Improving memorized-data unlearning in large language models,'' 2024. [Online]. Available: \url{https://arxiv.org/abs/2405.03097}
\BIBentrySTDinterwordspacing

\bibitem{liu2025rethinking}
S.~Liu, Y.~Yao, J.~Jia, S.~Casper, N.~Baracaldo, P.~Hase, Y.~Yao, C.~Y. Liu, X.~Xu, H.~Li \emph{et~al.}, ``Rethinking machine unlearning for large language models,'' \emph{Nature Machine Intelligence}, pp. 1--14, 2025.

\bibitem{wang2025rethinking}
Q.~Wang, J.~P. Zhou, Z.~Zhou, S.~Shin, B.~Han, and K.~Q. Weinberger, ``Rethinking {LLM} unlearning objectives: A gradient perspective and go beyond,'' in \emph{ICLR}, 2025.

\bibitem{liu2024towards}
Z.~Liu, G.~Dou, Z.~Tan, Y.~Tian, and M.~Jiang, ``Towards safer large language models through machine unlearning.''\hskip 1em plus 0.5em minus 0.4em\relax Association for Computational Linguistics, 2024.

\bibitem{tong2025robust}
Y.~Tong, Y.~Wang, J.~Yuan, and C.~Hu, ``Robust machine unlearning for quantized neural networks via adaptive gradient reweighting with similar labels,'' in \emph{Proceedings of the IEEE/CVF International Conference on Computer Vision}, 2025, pp. 20\,603--20\,612.

\bibitem{wang2024model}
W.~Wang, W.~Chen, Y.~Luo, Y.~Long, Z.~Lin, L.~Zhang, B.~Lin, D.~Cai, and X.~He, ``Model compression and efficient inference for large language models: A survey,'' \emph{arXiv preprint arXiv:2402.09748}, 2024.

\bibitem{zhou2024survey}
Z.~Zhou, X.~Ning, K.~Hong, T.~Fu, J.~Xu, S.~Li, Y.~Lou, L.~Wang, Z.~Yuan, X.~Li \emph{et~al.}, ``A survey on efficient inference for large language models,'' \emph{arXiv preprint arXiv:2404.14294}, 2024.

\bibitem{lee2020layer}
J.~Lee, S.~Park, S.~Mo, S.~Ahn, and J.~Shin, ``Layer-adaptive sparsity for the magnitude-based pruning,'' \emph{arXiv preprint arXiv:2010.07611}, 2020.

\bibitem{das2023beyond}
R.~J. Das, M.~Sun, L.~Ma, and Z.~Shen, ``Beyond size: How gradients shape pruning decisions in large language models,'' \emph{arXiv preprint arXiv:2311.04902}, 2023.

\bibitem{frantar2023sparsegpt}
E.~Frantar and D.~Alistarh, ``Sparsegpt: Massive language models can be accurately pruned in one-shot,'' in \emph{ICML}, 2023.

\bibitem{sun2024wanda}
M.~Sun \emph{et~al.}, ``A simple and effective pruning approach for large language models,'' in \emph{ICLR}, 2024.

\bibitem{Kirkpatrick_2017}
\BIBentryALTinterwordspacing
J.~Kirkpatrick, R.~Pascanu, N.~Rabinowitz, J.~Veness, G.~Desjardins, A.~A. Rusu, K.~Milan, J.~Quan, T.~Ramalho, A.~Grabska-Barwinska, D.~Hassabis, C.~Clopath, D.~Kumaran, and R.~Hadsell, ``Overcoming catastrophic forgetting in neural networks,'' \emph{Proceedings of the National Academy of Sciences}, vol. 114, no.~13, p. 3521–3526, Mar. 2017. [Online]. Available: \url{http://dx.doi.org/10.1073/pnas.1611835114}
\BIBentrySTDinterwordspacing

\bibitem{maini2024tofu}
P.~Maini, Z.~Feng, A.~Schwarzschild, Z.~C. Lipton, and J.~Z. Kolter, ``{TOFU}: A task of fictitious unlearning for {LLM}s,'' in \emph{COLM}, 2024.

\bibitem{shi2025muse}
W.~Shi, J.~Lee, Y.~Huang, S.~Malladi, J.~Zhao, A.~Holtzman, D.~Liu, L.~Zettlemoyer, N.~A. Smith, and C.~Zhang, ``{MUSE}: Machine unlearning six-way evaluation for language models,'' in \emph{ICLR}, 2025.

\bibitem{li2024wmdp}
N.~Li, A.~Pan, A.~Gopal, S.~Yue, D.~Berrios, A.~Gatti, J.~D. Li, A.-K. Dombrowski, S.~Goel, G.~Mukobi \emph{et~al.}, ``The {WMDP} benchmark: Measuring and reducing malicious use with unlearning,'' in \emph{ICML}, 2024.

\bibitem{shi2024detecting}
W.~Shi, A.~Ajith, M.~Xia, Y.~Huang, D.~Liu, T.~Blevins, D.~Chen, and L.~Zettlemoyer, ``Detecting pretraining data from large language models,'' in \emph{12th International Conference on Learning Representations, ICLR 2024}, 2024.

\bibitem{touvron2023llama2}
H.~Touvron, L.~Martin, K.~Stone, P.~Albert, A.~Almahairi, Y.~Babaei, N.~Bashlykov, S.~Batra, P.~Bhargava, S.~Bhosale \emph{et~al.}, ``Llama 2: Open foundation and fine-tuned chat models,'' \emph{arXiv preprint arXiv:2307.09288}, 2023.

\bibitem{tunstall2023zephyr}
L.~Tunstall, E.~Beeching, N.~Lambert, N.~Rajani, K.~Rasul, Y.~Belkada, S.~Huang, L.~von Werra, C.~Fourrier, N.~Habib \emph{et~al.}, ``Zephyr: Direct distillation of lm alignment,'' \emph{arXiv preprint arXiv:2310.16944}, 2023.

\bibitem{yangexploring}
P.~Yang, Q.~Wang, Z.~Huang, T.~Liu, C.~Zhang, and B.~Han, ``Exploring criteria of loss reweighting to enhance llm unlearning,'' in \emph{Forty-second International Conference on Machine Learning}, 2025.

\bibitem{zhang2024negative}
R.~Zhang, L.~Lin, Y.~Bai, and S.~Mei, ``Negative preference optimization: From catastrophic collapse to effective unlearning,'' in \emph{COLM}, 2024.

\bibitem{dong2025undial}
Y.~R. Dong, H.~Lin, M.~Belkin, R.~Huerta, and I.~Vuli{\'c}, ``{UNDIAL}: Self-distillation with adjusted logits for robust unlearning in large language models,'' in \emph{NAACL}, 2025.

\bibitem{openunlearning2025}
V.~Dorna, A.~Mekala, W.~Zhao, A.~McCallum, Z.~C. Lipton, J.~Z. Kolter, and P.~Maini, ``{OpenUnlearning}: Accelerating {LLM} unlearning via unified benchmarking of methods and metrics,'' in \emph{NeurIPS}, 2025.

\bibitem{hendrycks2020measuring}
D.~Hendrycks, C.~Burns, S.~Basart, A.~Zou, M.~Mazeika, D.~Song, and J.~Steinhardt, ``Measuring massive multitask language understanding,'' \emph{arXiv preprint arXiv:2009.03300}, 2020.

\bibitem{loshchilov2017fixing}
I.~Loshchilov and F.~Hutter, ``Fixing weight decay regularization in adam,'' \emph{CoRR}, 2017.

\end{thebibliography}


\begin{IEEEbiography}
[{\includegraphics[width=1in,height=1.24in,clip,keepaspectratio]{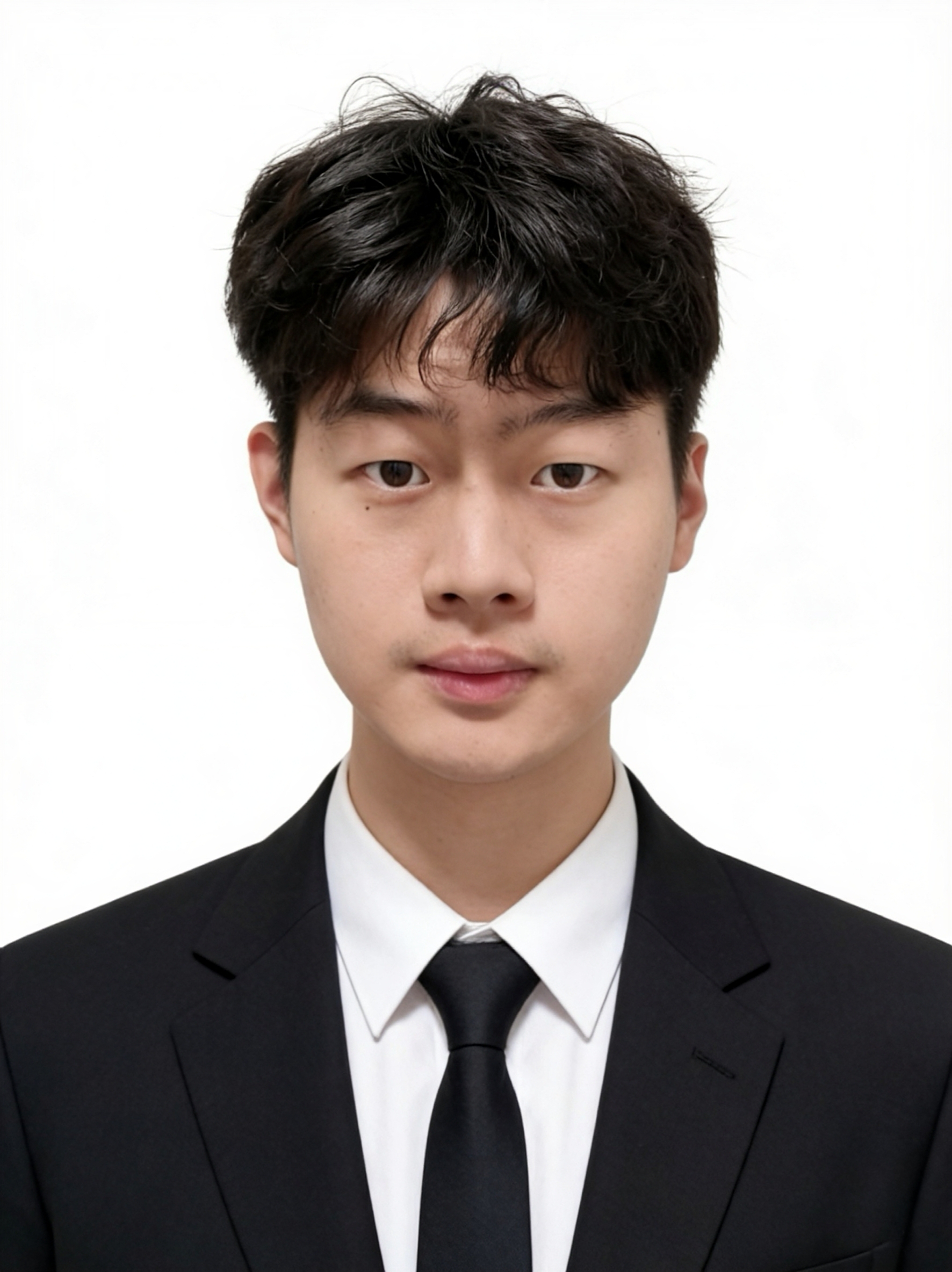}}]{Yuze Wang} received his B.S. degree from Wuhan University of Technology in 2024. He is currently working towards the M.S. degree in Computer Science and Technology with the computer science and artificial intelligence school, Wuhan University of Technology. His research interests include efficient LLMs , machine learning and trustworthy AI.
\end{IEEEbiography}

\begin{IEEEbiography}
[{\includegraphics[width=1in,height=1.24in,clip,keepaspectratio]{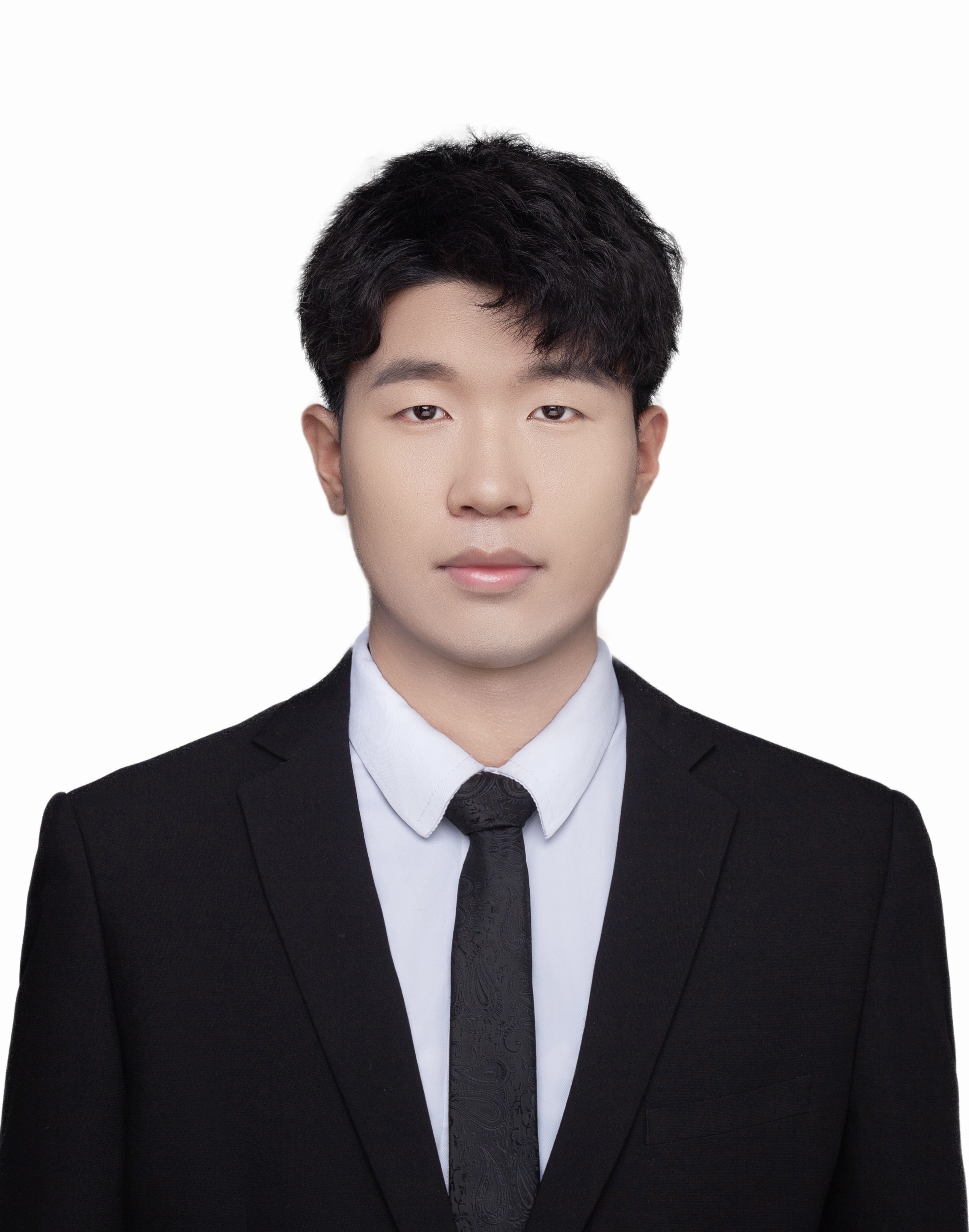}}]{Yujia Tong} received his B.S. degree from Wuhan University of Technology in 2024. He is currently working towards the M.S. degree in Computer Science and Technology with the computer science and
artificial intelligence school, Wuhan University of Technology. His research interests include Efficient Machine Learning and Trustworthy AI.
\end{IEEEbiography}

\begin{IEEEbiography}
[{\includegraphics[width=1in,height=1.24in,clip,keepaspectratio]{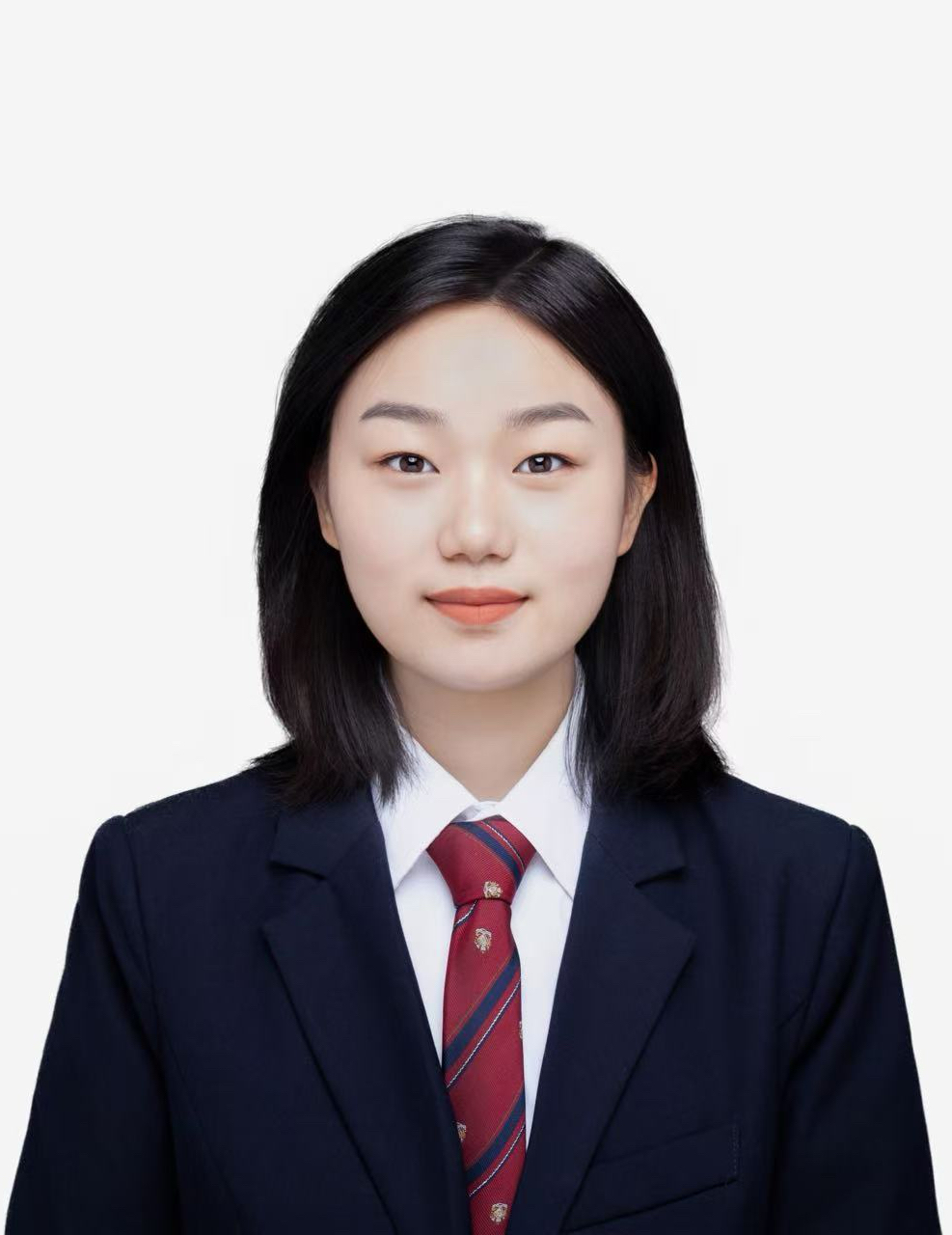}}]{Xuan Liu}
received her B.S. degree from The Hong Kong Polytechnic University in 2025. She is currently pursuing her Ph.D. degree at the University of California, San Diego. Her research interests include human-centered AI, human-computer interaction, and trustworthy AI.
\end{IEEEbiography}

\begin{IEEEbiography}
[{\includegraphics[width=1in,height=1.24in,clip,keepaspectratio]{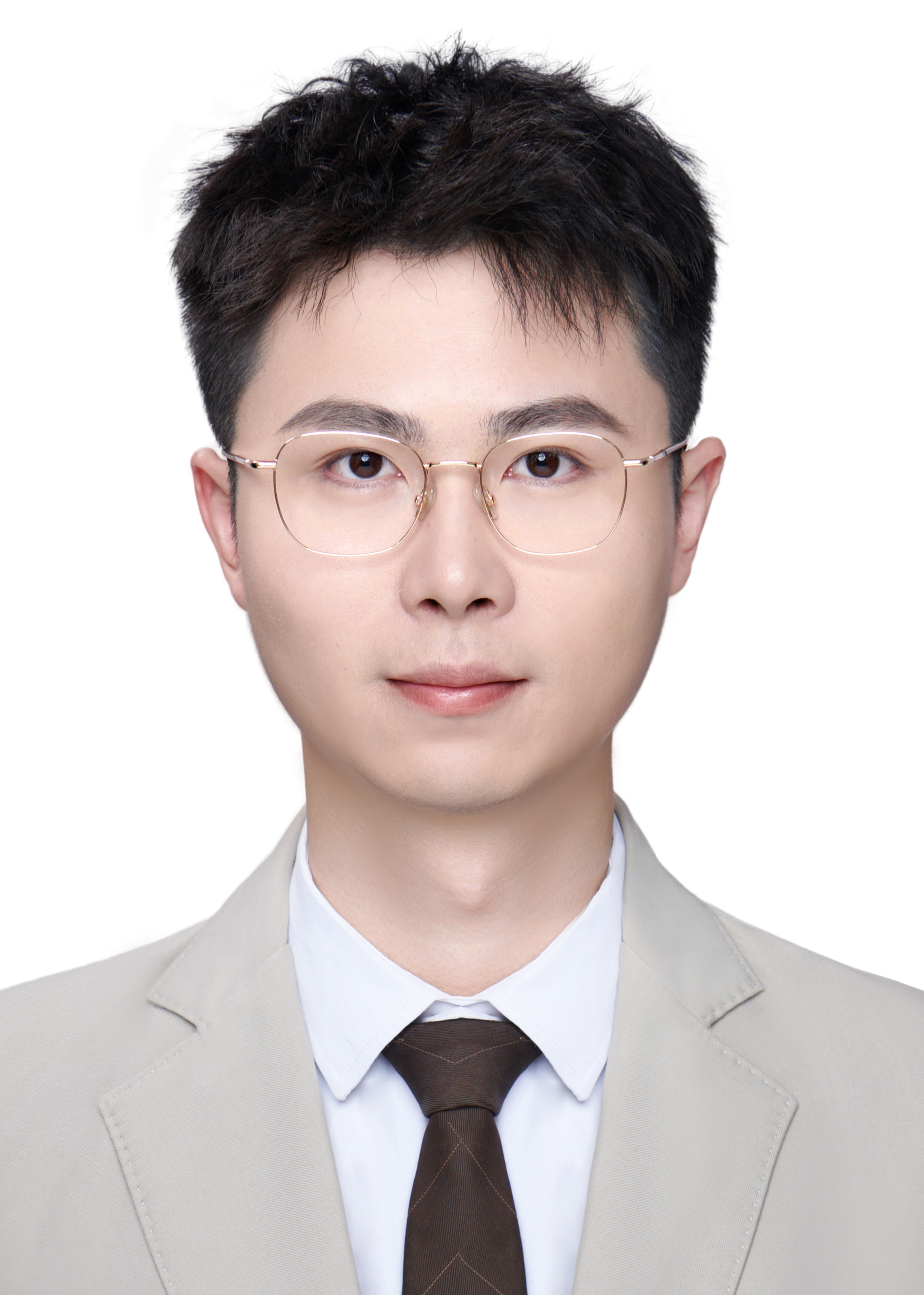}}]{Junhao Dong}
received the M.S. degree in computer science and technology from Sun Yat-sen University, Guangzhou, China, in 2023. He is currently working toward the Ph.D. degree with the College of Computing and Data Science, Nanyang Technological University, Singapore. His research interests include trustworthy AI, computer vision, and machine learning. He serves as an assistant program chair for NeurIPS 2025 and a session chair for KDD 2025.
\end{IEEEbiography}

\end{document}